
\documentclass{article}

\usepackage{microtype}
\usepackage{graphicx}
\usepackage{subfigure}
\usepackage{booktabs} 

\usepackage{hyperref}
\usepackage{colortbl}
\usepackage{aligned-overset}
\usepackage{tikz}
\usepackage{makecell}
\usepackage{amsthm}
\usepackage{bbm}
\usepackage{caption}
\usetikzlibrary{positioning, bayesnet}
\usepackage{multirow}
\usepackage{tablefootnote}
\usepackage{textcomp}
\usepackage{amsmath}
\usepackage{stfloats}

\usepackage{nicefrac}       
\usepackage{microtype}      
\usepackage{xcolor}         
\usepackage{bm}


\newcommand{\dif}{\mathrm{d}}

\usepackage[accepted]{icml2024}

\usepackage{amsmath}
\usepackage{amssymb}
\usepackage{mathtools}
\usepackage{amsthm}

\usepackage[capitalize,noabbrev]{cleveref}

\theoremstyle{plain}
\newtheorem{theorem}{Theorem}[section]

\theoremstyle{definition}

\theoremstyle{remark}
\newtheorem{remark}[theorem]{Remark}
\newtheorem{example}[theorem]{Example}
\usepackage[disable,textsize=tiny]{todonotes}

\icmltitlerunning{Reflected Flow Matching}

\begin{document}

\twocolumn[
\icmltitle{Reflected Flow Matching}



\icmlsetsymbol{equal}{*}

\begin{icmlauthorlist}
\icmlauthor{Tianyu Xie$^\ast$}{pkumath}
\icmlauthor{Yu Zhu$^\ast$}{cas,baai}
\icmlauthor{Longlin Yu$^\ast$}{pkumath}
\icmlauthor{Tong Yang}{fdu,meg}
\icmlauthor{Ziheng Cheng}{pkumath}\\
\icmlauthor{Shiyue Zhang}{pkumath}
\icmlauthor{Xiangyu Zhang}{meg}
\icmlauthor{Cheng Zhang}{pkumath,pkucss}
\end{icmlauthorlist}

\icmlaffiliation{pkumath}{School of Mathematical Sciences, Peking University, Beijing, China}
\icmlaffiliation{pkucss}{Center for Statistical Science, Peking University, Beijing, China}
\icmlaffiliation{cas}{Institute of Automation, Chinese Academy of Sciences, Beijing, China}
\icmlaffiliation{baai}{Beijing Academy of Artificial Intelligence, Beijing, China}
\icmlaffiliation{fdu}{School of Computer Science, Fudan University, Shanghai, China}
\icmlaffiliation{meg}{Megvii Technology Inc., Beijing, China}

\icmlcorrespondingauthor{Cheng Zhang}{chengzhang@math.pku.edu.cn}

\icmlkeywords{Machine Learning, ICML}
\vskip 0.3in
]



\printAffiliationsAndNotice{\icmlEqualContribution} 

\begin{abstract}
Continuous normalizing flows (CNFs) learn an ordinary differential equation to transform prior samples into data.
Flow matching (FM) has recently emerged as a simulation-free approach for training CNFs by regressing a velocity model towards the conditional velocity field.
However, on constrained domains, the learned velocity model may lead to undesirable flows that result in highly unnatural samples, e.g., oversaturated images, due to both flow matching error and simulation error.
To address this, we add a boundary constraint term to CNFs, which leads to reflected CNFs that keep trajectories within the constrained domains.
We propose reflected flow matching (RFM) to train the velocity model in reflected CNFs by matching the conditional velocity fields in a simulation-free manner, similar to the vanilla FM.
Moreover, the analytical form of conditional velocity fields in RFM avoids potentially biased approximations, making it superior to existing score-based generative models on constrained domains.
We demonstrate that RFM achieves comparable or better results on standard image benchmarks and produces high-quality class-conditioned samples under high guidance weight.
\end{abstract}

\section{Introduction}
Deep generative models, which are deep learning models designed to generate new data samples that resemble a given data set, find applications in various domains including image synthesis \citep{Arjovsky2017WGAN, ho2020denoising, dhariwal2021diffusion, song2023consistency}, text generation \citep{devlin2018bert,brown2020GPT3,zhang2022opt}, and molecular design \citep{jin2018junctiontree, xu2021geodiff, hoogeboom2022equivariant}.
While the recent advances in this domain are mostly driven by diffusion models \citep{ho2020denoising,song2020score}, a powerful alternative is flow-based model \citep{NF}, which works by learning a \textit{flow}, i.e., a function that transforms samples from a simple prior distribution into samples distributed similarly to the target data distribution.
Flows can also be implicitly defined via ordinary differential equations (ODEs), which lead to a general framework called continuous normalizing flows (CNFs) \citep{chen2018neuralode}.
Despite their flexibility, classical maximum likelihood training of CNFs can be inefficient as it requires expensive numerical ODE simulations.

Alternatively, flow matching (FM) \citep{lipman2023FM} presents an efficient and scalable training method for CNFs by regressing a learnable velocity model to the target velocity field (i.e., the drift term in ODE) that generates the data distribution.
Inspired by denoising score matching (DSM) \citep{Vincent2011}, FM decomposes the intractable target velocity field into a mixture of analytical conditional velocity fields, leading to a regression objective that is computationally stable and simulation-free.
A typical example of conditional velocity fields is the optimal transport (OT) conditional velocity field, which has a constant speed along the straight line from a prior sample to a data point (i.e., its condition).
FM has also been extended to equivariance modeling \citep{klein2023equivariantFM}, Bayesian inference \citep{wildberger2023flowSBI}, and dynamic optimal transport \citep{tong2023CFM}, etc.

Despite the superiority of FM, the learned velocity model may cause unreasonable flows, due to both the inherent flow matching error (i.e., the difference between the learned velocity model and the true velocity field) and compounded simulation error (i.e., the error introduced by discretizing the ODE during sampling), which can generate highly unnatural samples, especially for complex data distributions on constrained domains. 
For example, an RGB image is constrained to a domain $[0,255]^d$ ($d$ is the number of digits), and a boundary violation may result in collapsed and singular samples.
Recently, there has been a growing interest in generative modeling on constrained domains, driven by its broad range of applications \citep{fishman2023diffusionCD, lou2023reflectedDM, liu2023mirror}.
Within the framework of score-based generative models \citep{song2020score,ho2020denoising}, reflected diffusion models (RDMs) \citep{lou2023reflectedDM} incorporate the constraints through a reflected stochastic differential equation (SDE) that always stays within the domain.
The reverse diffusion process is then parameterized with the scores of the perturbed intermediate distributions, which can be trained via DSM.
However, the transition kernels in RDMs lack a closed-form expression, necessitating the complicated conditional score approximation that requires computing reflecting points \citep{jing2022torsionaldiffusion}  or solving partial differential equations \citep{Bortoli2022RiemannianSGM}.
These challenges introduce bias and make the application of RDMs difficult for general domains, limiting their practicality to simple constrained domains like hypercubes.
Although implicit score matching (ISM) avoids the computation of transition kernels \citep{fishman2023diffusionCD}, it is less stable than DSM and does not scale well to high-dimensional cases.

In this work, we propose reflected flow matching (RFM), which extends the training and sampling procedures of CNFs to constrained domains. 
To achieve this, we add an additional boundary constraint term to the ODEs in CNFs, which we call reflected CNFs. 
When initiated from prior distributions supported on the target constrained domain, the reflected CNFs ensure that the trajectories remain within this domain.
The velocity fields in the reflected CNFs can be trained by matching the conditional velocity fields in a simulation-free manner, akin to vanilla FM \citep{lipman2023FM}.
Moreover, these conditional velocity fields can be readily derived from corresponding conditional flows that satisfy the constraints.
This means that, unlike RDMs \citep{lou2023reflectedDM}, FM can be smoothly generalized to constrained domains with no extra computational burden.
Moreover, inherited from FM, the RFM framework permits the choice of arbitrary prior distributions on the constrained domain, which may provide extra flexibility for generative modeling.
We demonstrate the effectiveness of RFM across various generative tasks, including low-dimensional toy examples as well as unconditional and conditional image generation benchmarks.
Our code is available at \href{https://github.com/tyuxie/RFM}{\texttt{https://github.com/tyuxie/RFM}}.
\section{Background}

\subsection{Flow Matching}

A time-dependent \emph{flow} $\bm{x}_t=\bm{\phi}_t(\bm{x})$, which describes the motion from a base point $\bm{x}$, can be defined by an ordinary differential equation (ODE)
\begin{subequations}\label{eq:particle-ODE}
\arraycolsep=1.8pt
\def\arraystretch{1.5}
\begin{align}
\mathrm{d}\bm{\phi}_t(\bm{x}) &= \bm{v}_t(\bm{\phi}_t(\bm{x}))\mathrm{d}t,\\
\bm{\phi}_0(\bm{x}) &=\bm{x},
\end{align}
\end{subequations}
for $t\in[0,1]$, where $\bm{v}_t(\cdot)\in\mathbb{R}^d$ is called the \emph{velocity field}. 
Given a density $p_0(\bm{x})$ at $t=0$, the velocity field induces a \emph{probability path} $p_t(\bm{x}): [0,1]\times \mathbb{R}^d\to\mathbb{R}$, where $p_t$ is the probability density function (PDF) of the points from $p_0(\bm{x})$ transported along $\bm{v}_t$ from time $0$ to time $t$, and it is characterized by the well-known continuity equation:
\begin{equation}
\frac{\partial }{\partial t}p_t(\bm{x}) = -\nabla\cdot(p_t(\bm{x})\bm{v}_t(\bm{x})), \ \bm{x}\in\mathbb{R}^d.
\end{equation}
\citet{chen2018neuralode} models the velocity field by a neural network $\bm{v}_{\bm{\theta}}(\bm{x},t)$ with parameters $\bm{\theta}$ to generate a learnable model of $\bm{\phi}_t$, called a continuous normalizing flow (CNF).
Given a target probability path $p_t(\bm{x})$ that connects a simple prior distribution $p_0(\bm{x})$ and a complicated target distribution $p_{1}(\bm{x})$ which is close to the data distribution $p_{\mathrm{data}}(\bm{x})$,
the main idea of flow matching (FM) \citep{lipman2023FM} is to regress the velocity model $\bm{v}_{\bm{\theta}}(\bm{x},t)$ to the target velocity field $\bm{v}_t(\bm{x})$ by minimizing the FM objective
\begin{equation}
\mathcal{L}_{\textrm{FM}}(\bm{\theta}) = \int_0^1\mathbb{E}_{p_t(\bm{x})}\|v_{\bm{\theta}}(\bm{x},t)-\bm{v}_t(\bm{x})\|^2\dif t.
\end{equation}
One way to construct such a target probability path is to create per-sample \emph{conditional probability paths} $p_t(\bm{x}|\bm{x}_1)$ that satisfy $p_0(\bm{x}|\bm{x}_1)=p_0(\bm{x}),p_1(\bm{x}|\bm{x}_1)\approx \delta(\bm{x}_1)$, and marginalize them over the data distribution as follows
\begin{equation}
p_t(\bm{x})=\int_{\mathbb{R}^d}p_t(\bm{x}|\bm{x}_1)p_{\mathrm{data}}(\bm{x}_1)\dif \bm{x}_1.
\end{equation}
\citet{lipman2023FM} then shows that the corresponding velocity field $\bm{v}_t(\bm{x})$ takes the following form 
\begin{equation}\label{eq:fm-marginal-velocity-field}
\bm{v}_t(\bm{x})=\int_{\mathbb{R}^d}\bm{v}_t(\bm{x}|\bm{x}_1)\frac{p_t(\bm{x}|\bm{x}_1)p_{\mathrm{data}}(\bm{x}_1)}{p_t(\bm{x})}\dif\bm{x}_1,
\end{equation}
where $\bm{v}_t(\bm{x}|\bm{x}_1)$ is the conditional velocity field that generates $p_t(\bm{x}|\bm{x}_1)$.
The velocity model $\bm{v}_{\bm{\theta}}(\bm{x},t)$, therefore, can be trained by minimizing the conditional flow matching (CFM) objective
\begin{equation}\label{eq:cfm}
\mathcal{L}_{\textrm{CFM}}(\bm{\theta})=\int_{0}^1 \mathbb{E}_{ p_t(\bm{x}|\bm{x}_1)p_{\mathrm{data}}(\bm{x}_1)}\|\bm{v}_{\bm{\theta}}(\bm{x}, t)-\bm{v}_t(\bm{x}|\bm{x}_1)\|^2\mathrm{d}t.
\end{equation}

Concretely, \citet{lipman2023FM} considers the Gaussian conditional probability paths, which can be induced by a flow conditioned on $\bm{x}_1$ as
\begin{equation}
\bm{\phi}_t(\bm{x}|\bm{x}_1) = \sigma_t(\bm{x}_1)\bm{x} + \bm{\mu}_t(\bm{x}_1),
\end{equation}
where $\bm{x}$ follows a standard Gaussian distribution.
Consequently, the conditional velocity field takes the form
\begin{equation}\label{eq:fm-conditional-velocity-field}
\bm{v}_{t}(\bm{x}|\bm{x}_1)=\frac{\sigma_t'(\bm{x}_1)}{\sigma_t(\bm{x}_1)}\left(\bm{x}-\bm{\mu}_t(\bm{x}_1)\right)+\bm{\mu}_t'(\bm{x}_1).
\end{equation}
A typical example is the optimal transport (OT) conditional velocity field, where $\sigma_t(\bm{x}_1) = 1-(1-\sigma_{\mathrm{min}})t$, $\bm{\mu}_t(\bm{x}_1)=t\bm{x}_1$, and $\sigma_{\mathrm{min}}$ is a sufficiently small value.

\subsection{Reflected Diffusion Models}
For generative modeling on constrained domains, reflected diffusion models (RDMs) \citep{lou2023reflectedDM} use reflected stochastic differential equations (SDEs) that perform reflection operations at the boundary to keep samples inside the target domain.
Let the support of the data distribution $q_{0}(\cdot)$ be $\Omega$, which is assumed to be connected and compact with a nonempty interior.
Concretely, the forward process in an RDM is described by the following reflected SDE
\begin{equation}\label{eq:sde}
\dif \bm{u}_t = \bm{f}(\bm{u}_t,t) \dif t + g(t)\dif \bm{B}_t + \dif \bm{L}_t, \ \bm{u}_0\sim q_0(\cdot),
\end{equation}
where $t\in[0,L]$, $\bm{f}(\bm{u}_t,t)$ and $g(t)$ are the drift term and diffusion term as in the classical diffusion models \citep{song2020score}, and $\bm{L}_t$ is an additional boundary constraint that intuitively forces the particle to stay inside $\Omega$.
When $\bm{u}_t$ hits the boundary $\partial\Omega$, $\bm{L}_t$ neutralizes the outward-normal velocity. 
To generate samples from the data distribution, one can simulate the reverse reflected SDE \citep{cattiaux1988reversedSDE, williams1987reversedSDE}
\begin{equation}\label{eq:reverse-sde}
\small
\begin{split}
  \dif \bm{u}_t = \left[\bm{f}(\bm{u}_t,t) -\frac{1}{2}(1+\lambda^2)g^2(t) \nabla\log q_t(\bm{u}_t)\right] \dif t \\+ \lambda g(t)\dif \bar{\bm{B}}_t +\dif\bm{L}_t, \ \bm{u}_L\sim q_L(\cdot),
\end{split}
\end{equation}
where $q_L(\cdot)=\mathrm{uniform}(\Omega)$ under some special conditions, $\bar{\bm{B}}_t$ is a standard Brownian motion when time flows back from $L$ to $0$ and $\nabla\log q_t(\bm{u}_t)$ is the score function at time $t$.

Although denoising score matching (DSM) \citep{Vincent2011, song2020score} succeeds in estimating the intractable score function $\nabla\log q_t(\bm{u}_t)$, the required conditional score function $\nabla_{\bm{u}_t}\log q_t(\bm{u}_t|\bm{u}_0)$ is no longer analytically available for the forward process in equation \eqref{eq:sde}.
To estimate $\nabla_{\bm{u}_t}\log q_t(\bm{u}_t|\bm{u}_0)$, RDMs rely on approximation methods based on sum of Gaussians \citep{jing2022torsionaldiffusion} or Laplacian eigenfunctions \citep{Bortoli2022RiemannianSGM}.
However, both of them can be domain-dependent and require truncating an infinite series of functions.
Besides, although the noisy training data in DSM can be obtained in a simulation-free manner, the required geometric techniques may be restrictive in practice.

\subsection{Classifier-free Guidance}\label{sec:guidance-diffusion}
Diffusion models can be controlled to generate guided samples from $\tilde{q}_t(\bm{u}|c)\propto q_t(c|\bm{u})^w q_t(\bm{u})$ where $c$ is a condition (e.g., a class label) and $w$ is the guidance weight \citep{ho2022classifier}.
The score function of $\tilde{q}_t(\bm{u}|c)$ satisfies
\begin{equation}\label{eq:conditional-score}
\nabla \log\tilde{q}_t(\bm{u}|c) = w\nabla\log q_t(c|\bm{u}) + \nabla\log q_t(\bm{u}).
\end{equation}
With the Bayes formula $q_t(c|\bm{u})=\frac{q_t(\bm{u}|c)q_t(c)}{q_t(\bm{u})}$ and a conditional score model $\bm{s}_\phi(\bm{u},t,c)\approx \nabla\log q_t(\bm{u}|c)$, the score function of $\tilde{q}_t(\bm{u}|c)$ can be estimated by
\begin{equation}\label{eq:score-classifier}
\tilde{\bm{s}}_\phi(\bm{u},t,c) = w\bm{s}_\phi(\bm{u},t,c)+(1-w)\bm{s}_\phi(\bm{u},t,\emptyset),
\end{equation}
where $\bm{s}_\phi(\bm{u},t,\emptyset) =\bm{s}_{\bm{\phi}}(\bm{u},t) \approx\nabla\log q_t(\bm{u}) $ denotes the unconditional score with $c$ set to the empty token.
Equation \eqref{eq:score-classifier} also holds for RDMs, and one can directly substitute the score function in equation \eqref{eq:reverse-sde} with $\tilde{\bm{s}}_\phi(\bm{u},t,c)$ to generate guided samples \citep{lou2023reflectedDM}.
\section{Proposed Method}
In this section, we present reflected flow matching (RFM), which learns a flow-based generative model for the data distribution $p_{\mathrm{data}}(\cdot)$ supported on a constrained domain $\Omega\subset \mathbb{R}^d$. The data domain $\Omega$ is assumed to be connected and compact with a non-empty interior $\Omega^o$ and a sufficiently regular boundary $\partial\Omega$.

\subsection{Reflected Ordinary Differential Equation}

To model flows over the constrained domain $\Omega$, we add an additional term for the boundary constraint to the vanilla ODE \eqref{eq:particle-ODE}, inspired by \citet{lou2023reflectedDM}.
Concretely, let $\bm{L}_t$ reflect the outward normal direction at $\partial\Omega$.
Given a initial point $\bm{x}$, our \emph{reflected ODE} for describing a time-dependent flow $\bm{x}_t=\bm{\phi}_t(\bm{x})$ takes the form
\begin{subequations}\label{eq:reflected-particle-ODE}
\arraycolsep=1.8pt
\def\arraystretch{1.5}
\begin{align}
\mathrm{d}\bm{\phi}_t(\bm{x}) &= \bm{v}_t(\bm{\phi}_t(\bm{x}))\mathrm{d}t + \dif\bm{L}_t,\\
\bm{\phi}_0(\bm{x}) &= \bm{x},
\end{align}
\end{subequations}
where $\bm{v}_t(\cdot): \Omega\to\mathbb{R}^d$ defines velocity field over $\Omega$ for each time $t\in[0,1]$. In equation \eqref{eq:reflected-particle-ODE}, the velocity term $\bm{v}_t(\bm{\phi}_t(\bm{x}))$ describes the motion of particles in $\Omega^o$ just as in the vanilla ODE \eqref{eq:particle-ODE}, and the reflection term $\dif\bm{L}_t$ compensates the outward velocity at $\partial\Omega$ by pushing the motion back to the domain $\Omega$.
That is, upon the motion's hitting the boundary $\partial \Omega$ at $\bm{x}_t$, $\dif\bm{L}_t$ neutralizes the outward normal-pointing component.
We give the following theorem for the existence and uniqueness of the solution to the reflected ODE \eqref{eq:reflected-particle-ODE}, which is a corollary of  \citet[Theorem 2.5.4]{pilipenko2014reflectedSDE} (see Appendix \ref{proof:existence} for details).
\begin{theorem}\label{thm:existence}
Assume i) the domain $\Omega$ satisfies the uniform exterior sphere condition and the uniform cone condition;
ii) the velocity field $\bm{v}_t(\bm{x})$ is Lipschitz continuous in $\bm{x}\in\Omega$ (uniformly on $t$).
Then the solution to the reflected ODE \eqref{eq:reflected-particle-ODE} exists and is unique on $t\in[0,1]$.
\end{theorem}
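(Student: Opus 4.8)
The plan is to recognize the reflected ODE \eqref{eq:reflected-particle-ODE} as a reflected stochastic differential equation whose diffusion coefficient vanishes, and then to invoke the well-posedness theory for such equations, namely \citet[Theorem 2.5.4]{pilipenko2014reflectedSDE}. In integral form the equation reads
\begin{equation*}
\bm{\phi}_t(\bm{x}) = \bm{x} + \int_0^t \bm{v}_s(\bm{\phi}_s(\bm{x}))\,\dif s + \bm{L}_t,
\end{equation*}
where $\bm{L}_t$ has bounded variation, $\bm{L}_0 = \bm{0}$, and $\dif\bm{L}_t = -\bm{n}(\bm{\phi}_t(\bm{x}))\,\dif|\bm{L}|_t$ is supported on the set $\{t : \bm{\phi}_t(\bm{x})\in\partial\Omega\}$ with $\bm{n}$ the outward normal. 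This is exactly the reflected SDE treated in the cited theorem, with drift $\bm{b}(\bm{x},t)=\bm{v}_t(\bm{x})$ and diffusion matrix $\bm{\sigma}\equiv\bm{0}$. The key point making the reduction legitimate is that the solution of a reflected SDE in such domains is built pathwise through the deterministic Skorokhod map $\bm{\Gamma}$ of $\Omega$, which under the uniform exterior sphere and uniform cone conditions is a well-defined, Lipschitz map on the space of continuous paths; consequently the absence of a Brownian term is harmless, and the construction applies verbatim to the single continuous driving path $t\mapsto \bm{x}+\int_0^t\bm{v}_s(\bm{\phi}_s)\,\dif s$.

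It then remains to check the hypotheses of \citet[Theorem 2.5.4]{pilipenko2014reflectedSDE}. The geometric requirements on $\Omega$ are precisely assumption (i). For the coefficients, $\bm{\sigma}\equiv\bm{0}$ is trivially globally Lipschitz and of linear growth, while $\bm{v}_t(\bm{x})$ is Lipschitz in $\bm{x}$ uniformly in $t$ by assumption (ii); together with the compactness of $\Omega$ (and the mild regularity that $t\mapsto\bm{v}_t(\bm{x}_0)$ be measurable and locally bounded at one fixed $\bm{x}_0$) this yields a uniform bound $\sup_{t\in[0,1]}\sup_{\bm{x}\in\Omega}\|\bm{v}_t(\bm{x})\|<\infty$, hence the linear growth of the drift. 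The time-dependence of $\bm{v}_t$ is either covered directly by the cited result or, if a time-homogeneous formulation is preferred, handled by augmenting the state to $(t,\bm{\phi}_t(\bm{x}))$ with the extra deterministic coordinate $\dot{t}=1$ carrying no reflection, so that the degenerate time direction leaves the Skorokhod construction untouched. Alternatively, a self-contained argument follows from a Banach fixed-point iteration on $C([0,\delta];\Omega)$ using the Lipschitz continuity of both $\bm{\Gamma}$ and $\bm{v}$ to produce a local solution, extended to $[0,1]$ by concatenating over sub-intervals of fixed length $\delta$.

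With all hypotheses verified, the cited theorem yields a unique pair $(\bm{\phi}_t(\bm{x}),\bm{L}_t)$ solving \eqref{eq:reflected-particle-ODE} on $t\in[0,1]$, which is the claim. I expect the only genuinely delicate point to be confirming that the degenerate ($\bm{\sigma}=\bm{0}$) and time-inhomogeneous setting lies within the scope of the reflected-SDE well-posedness theory; this is ultimately mild, since that theory rests on the pathwise Skorokhod problem, whose solvability and Lipschitz dependence on the input path are purely deterministic consequences of the exterior sphere and cone conditions and are indifferent to how the driving path is generated.
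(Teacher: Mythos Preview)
Your proposal is correct and follows essentially the same route as the paper: both view the reflected ODE as the degenerate case of a reflected SDE with $\bm{\sigma}\equiv\bm{0}$ and invoke \citet[Theorem 2.5.4]{pilipenko2014reflectedSDE}, with the underlying deterministic Skorokhod problem (Theorem 2.5.2 there) supplying the pathwise construction. Your write-up is somewhat more explicit than the paper's sketch in verifying the coefficient hypotheses and in flagging the time-inhomogeneity, and you offer a Banach fixed-point alternative where the paper instead gestures at an Euler-discretization limit; these are minor stylistic differences within the same overall argument.
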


\begin{remark}
The reflected ODE \eqref{eq:reflected-particle-ODE} recovers the vanilla ODE \eqref{eq:particle-ODE} in flow matching by taking $\Omega=\mathbb{R}^d$.
In this case, the reflection term $\dif\bm{L}_t$ disappears because the motion will never hit the boundary.
\end{remark}

With the initial point $\bm{x}$ distributed as a simple prior distribution $p_0(\cdot)$ supported on $\Omega$, the PDF $p_t(\cdot)$ of the flow $\bm{\phi}_t(\bm{x})$ can also be used to construct a probability path which connects $p_0(\cdot)$ and a target distribution $p_1(\cdot)$ that closely approximates the data distribution $p_{\mathrm{data}}(\cdot)$.
Moreover, due to the reflection term $\dif\bm{L}_t$, the probability path $p_t(\bm{x})$ is naturally supported on $\Omega$
and evolves according to the following continuity equation with the Neumann boundary condition\footnote{Informally, when a particle hits the boundary, the outward-normal part of its velocity would be neutralized by the reflection term $\mathrm{d}L_t$, which would make the velocity satisfy the Neumann boundary condition.} \citep{schuss2015brownian}
\begin{subequations}\label{eq:continuity}
\arraycolsep=1.8pt
\def\arraystretch{1.5}
\begin{align}
\frac{\partial }{\partial t} p_t(\bm{x}) = -\nabla\cdot\left(\bm{v}_t(\bm{x}) p_t(\bm{x})\right),&\  \bm{x}\in\Omega^o,\label{eq:continuity-1} \\
p_t(\bm{x})\bm{v}_t(\bm{x})\cdot  \bm{n}(\bm{x}) = 0,&\ \bm{x}\in\partial\Omega,\label{eq:continuity-2}
\end{align}
\end{subequations}
where $\bm{n}(\bm{x})$ is the outward normal vector at $\bm{x}$ on $\partial\Omega$.

\subsection{Reflected Flow Matching}
Similarly to \citet{lipman2023FM}, the basic idea of RFM is to learn a parametrized velocity model $\bm{v}_{\theta}(\bm{x},t)$, which leads to a deep model of $\bm{\phi}_t$ called \emph{reflected CNF}, by minimizing the following RFM loss
\begin{equation}
\mathcal{L}_{\textrm{RFM}}(\bm{\theta})=\int_{0}^1 \mathbb{E}_{p_t(\bm{x})}\|\bm{v}_{\bm{\theta}}(\bm{x}, t)-\bm{v}_t(\bm{x})\|^2\mathrm{d}t,
\end{equation}
given a target probability path $p_t(\bm{x})$ and a corresponding velocity field $\bm{v}_t(\bm{x})$.
We can construct such a probability path and velocity field as follows.
Let $p_{t}(\bm{x}|\bm{x}_1)$ be a conditional probability path satisfying $p_0(\bm{x}|\bm{x}_1)=p_0(\bm{x}), p_1(\bm{x}|\bm{x}_1)\approx \delta(\bm{x}_1)$.
The marginal probability path then takes the form
\begin{equation}\label{eq:marginal-probability-path}
p_{t}(\bm{x}) = \int_\Omega p_{t}(\bm{x}|\bm{x}_1)p_{\mathrm{data}}(\bm{x}_1)\dif \bm{x}_1.
\end{equation}
Let $\bm{v}_t(\bm{x}|\bm{x}_1)$ be a conditional velocity field that generates the conditional probability path $ p_{t}(\bm{x}|\bm{x}_1)$ and defines the marginal velocity field
\begin{equation}\label{eq:marginal-velocity-field}
\bm{v}_t(\bm{x}) = \int_\Omega \bm{v}_t(\bm{x}|\bm{x}_1)\frac{p_t(\bm{x}|\bm{x}_1)p_{\mathrm{data}}(\bm{x}_1)}{p_t(\bm{x})}\dif \bm{x}_1.
\end{equation}
Note that we require $\mathrm{supp}\left(p_{t}(\bm{x}|\bm{x}_1)\right)\subset \Omega$ for all $t$ and $\bm{x}_1$, in contrast to the vanilla FM \citep{lipman2023FM}.
In fact, the velocity field $\bm{v}_t(\bm{x})$ in equation \eqref{eq:marginal-velocity-field} exactly generates the probability path $p_{t}(\bm{x})$ in equation \eqref{eq:marginal-probability-path}, which is formalized in Theorem \ref{thm:probability-path}.

\begin{theorem}\label{thm:probability-path}
Assume $\mathrm{supp}\left(p_{t}(\bm{x}|\bm{x}_1)\right)\subset \Omega$ for all $t$ and $\bm{x}_1$.
For any target distribution $p_1(\bm{x}_1)$, if the conditional probability path $p_t(\bm{x}|\bm{x}_1)$ and conditional velocity field $\bm{v}_t(\bm{x}|\bm{x}_1)$ satisfy the continuity equation \eqref{eq:continuity}, then marginal probability path $p_t(\bm{x})$ and the marginal velocity field $\bm{v}_t(\bm{x})$ also satisfy the continuity equation \eqref{eq:continuity}.
\end{theorem}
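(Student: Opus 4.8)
The plan is to adapt the marginalization argument of \citet[Theorem 3]{lipman2023FM} to the reflected setting; the only genuinely new ingredient is checking that the Neumann boundary condition \eqref{eq:continuity-2} is inherited by the marginals. Throughout I would assume enough regularity (e.g.\ $p_t(\bm{x}|\bm{x}_1)$ continuously differentiable in $(t,\bm{x})$, the conditional velocity fields bounded and Lipschitz as in Theorem~\ref{thm:existence}, and $p_t(\bm{x})>0$ on $\Omega^o$) so that the interchanges of $\partial_t$, $\nabla\cdot$ and the integral over $\bm{x}_1\in\Omega$ used below are justified by dominated convergence, with integrability supplied by the compactness of $\Omega$.

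First I would establish the interior equation \eqref{eq:continuity-1} for the marginals. Multiplying the conditional continuity equation $\partial_t p_t(\bm{x}|\bm{x}_1) = -\nabla\cdot(\bm{v}_t(\bm{x}|\bm{x}_1)p_t(\bm{x}|\bm{x}_1))$ by $p_{\mathrm{data}}(\bm{x}_1)$ and integrating over $\bm{x}_1\in\Omega$, the left-hand side becomes $\partial_t\!\int_\Omega p_t(\bm{x}|\bm{x}_1)p_{\mathrm{data}}(\bm{x}_1)\dif\bm{x}_1 = \partial_t p_t(\bm{x})$ by \eqref{eq:marginal-probability-path}, while the right-hand side becomes $-\nabla\cdot\!\int_\Omega \bm{v}_t(\bm{x}|\bm{x}_1)p_t(\bm{x}|\bm{x}_1)p_{\mathrm{data}}(\bm{x}_1)\dif\bm{x}_1$ after pulling $\nabla\cdot$ out of the integral. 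By the definition of the marginal velocity field \eqref{eq:marginal-velocity-field}, the integrand here is exactly $\bm{v}_t(\bm{x})p_t(\bm{x})$, so the right-hand side equals $-\nabla\cdot(\bm{v}_t(\bm{x})p_t(\bm{x}))$, which is \eqref{eq:continuity-1}.

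Next, for the boundary condition \eqref{eq:continuity-2}, I would simply use linearity of the integral and of the inner product: for $\bm{x}\in\partial\Omega$,
\[
p_t(\bm{x})\bm{v}_t(\bm{x})\cdot\bm{n}(\bm{x}) = \int_\Omega \big(p_t(\bm{x}|\bm{x}_1)\,\bm{v}_t(\bm{x}|\bm{x}_1)\cdot\bm{n}(\bm{x})\big)\,p_{\mathrm{data}}(\bm{x}_1)\dif\bm{x}_1 = 0,
\]
where the first equality is \eqref{eq:marginal-velocity-field} multiplied by $p_t(\bm{x})$ and dotted with $\bm{n}(\bm{x})$, and the integrand vanishes because each conditional pair satisfies \eqref{eq:continuity-2}. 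Combining the two parts shows $(p_t,\bm{v}_t)$ satisfies the full continuity equation with Neumann boundary condition \eqref{eq:continuity}.

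The main obstacle is not conceptual but technical: rigorously justifying differentiation under the integral sign (both in $t$ and in $\bm{x}$) and the exchange of the divergence with the integral. This is exactly where the regularity/integrability hypotheses enter, just as in vanilla FM, where one assumes, e.g., $p_t(\bm{x}|\bm{x}_1)$ has support contained in $\Omega$ and $\bm{v}_t(\bm{x}|\bm{x}_1)p_t(\bm{x}|\bm{x}_1)$ together with its spatial derivatives is dominated by a $p_{\mathrm{data}}$-integrable function. By contrast, the boundary step above is the new piece relative to \citet{lipman2023FM} and is essentially immediate.
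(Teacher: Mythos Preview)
Your proposal is correct and follows essentially the same approach as the paper: the interior continuity equation is obtained by the marginalization argument of \citet{lipman2023FM} (the paper simply cites their Theorem~1, whereas you spell it out), and the Neumann boundary condition is verified exactly as you do, by writing $p_t(\bm{x})\bm{v}_t(\bm{x})=\int_\Omega \bm{v}_t(\bm{x}|\bm{x}_1)p_t(\bm{x}|\bm{x}_1)p_{\mathrm{data}}(\bm{x}_1)\dif\bm{x}_1$ and using that each conditional pair satisfies \eqref{eq:continuity-2}.
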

\begin{proof}
The fact that $p_t(\bm{x})$ and $\bm{v}_t(\bm{x})$ satisfy equation \eqref{eq:continuity-1} is from \citet[Theorem 1]{lipman2023FM}.
To prove $p_t(\bm{x})$ and $\bm{v}_t(\bm{x})$ satisfy the Neumann boundary condition \eqref{eq:continuity-2}, it suffices to combine
\[
p_t(\bm{x})\bm{v}_t(\bm{x}) = \int_\Omega  \bm{v}_t(\bm{x}|\bm{x}_1)p_t(\bm{x}|\bm{x}_1)p_{\mathrm{data}}(\bm{x}_1)\dif \bm{x}_1
\]
and $p_t(\bm{x}|\bm{x}_1)\bm{v}_t(\bm{x}|\bm{x}_1)\cdot \bm{n}(\bm{x})=0$ for $\bm{x}\in\partial\Omega$.
\end{proof}
Similarly to the CFM objective in equation \eqref{eq:cfm}, the velocity model $\bm{v}_{\bm{\theta}}(\bm{x},t)$ in reflected CNFs can be trained by optimizing the following conditional reflected flow matching (CRFM) objective
\begin{equation}\label{eq:crfm}
\mathcal{L}_{\textrm{CRFM}}(\bm{\theta})=\int_{0}^1 \mathbb{E}_{ p_t(\bm{x}|\bm{x}_1)p_{\mathrm{data}}(\bm{x}_1)}\|\bm{v}_{\bm{\theta}}(\bm{x}, t)-\bm{v}_t(\bm{x}|\bm{x}_1)\|^2\mathrm{d}t,
\end{equation}
as proved in Theorem \ref{thm:crfm} (see Appendix \ref{proof:crfm} for proof).
\begin{theorem}\label{thm:crfm}
Assume that $p_t(\bm{x})>0$ for all $\bm{x}\in\Omega^o$ and $t\in[0,1]$. Then $\nabla \mathcal{L}_{\textrm{RFM}}(\bm{\theta}) = \nabla \mathcal{L}_{\textrm{CRFM}}(\bm{\theta})$.
\end{theorem}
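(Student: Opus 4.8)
The plan is to mirror the proof of \citet[Theorem 2]{lipman2023FM}, with every integral restricted to $\Omega$ and the positivity hypothesis $p_t(\bm{x})>0$ on $\Omega^o$ used to guarantee that the marginal velocity field in \eqref{eq:marginal-velocity-field} is well-defined ($p_t$-almost everywhere). The crucial structural observation is that both $\mathcal{L}_{\textrm{RFM}}$ and $\mathcal{L}_{\textrm{CRFM}}$ are quadratic in $\bm{v}_{\bm{\theta}}$, so after expanding the squared norms the only $\bm{\theta}$-dependent pieces are the pure quadratic term $\mathbb{E}\|\bm{v}_{\bm{\theta}}\|^2$ and the cross term $\mathbb{E}\langle\bm{v}_{\bm{\theta}},\cdot\rangle$; it then suffices to show that each of these agrees between the two losses, since the remaining contributions ($\|\bm{v}_t(\bm{x})\|^2$ and $\|\bm{v}_t(\bm{x}|\bm{x}_1)\|^2$) do not involve $\bm{\theta}$ and vanish under $\nabla_{\bm{\theta}}$.

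First I would handle the quadratic term: by the marginalization identity \eqref{eq:marginal-probability-path} and Fubini,
\[
\mathbb{E}_{p_t(\bm{x})}\|\bm{v}_{\bm{\theta}}(\bm{x},t)\|^2 = \int_\Omega\!\!\int_\Omega \|\bm{v}_{\bm{\theta}}(\bm{x},t)\|^2\, p_t(\bm{x}|\bm{x}_1)\,p_{\mathrm{data}}(\bm{x}_1)\,\dif\bm{x}_1\,\dif\bm{x},
\]
which is exactly $\mathbb{E}_{p_t(\bm{x}|\bm{x}_1)p_{\mathrm{data}}(\bm{x}_1)}\|\bm{v}_{\bm{\theta}}(\bm{x},t)\|^2$. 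For the cross term I would invoke the definition \eqref{eq:marginal-velocity-field}; since $p_t(\bm{x})>0$ on $\Omega^o$ and $\partial\Omega$ is Lebesgue-null, it rewrites as $p_t(\bm{x})\bm{v}_t(\bm{x})=\int_\Omega \bm{v}_t(\bm{x}|\bm{x}_1)p_t(\bm{x}|\bm{x}_1)p_{\mathrm{data}}(\bm{x}_1)\,\dif\bm{x}_1$ on $\Omega$, so that
\[
\mathbb{E}_{p_t(\bm{x})}\langle\bm{v}_{\bm{\theta}}(\bm{x},t),\bm{v}_t(\bm{x})\rangle = \int_\Omega\!\!\int_\Omega \langle\bm{v}_{\bm{\theta}}(\bm{x},t),\bm{v}_t(\bm{x}|\bm{x}_1)\rangle\, p_t(\bm{x}|\bm{x}_1)\,p_{\mathrm{data}}(\bm{x}_1)\,\dif\bm{x}_1\,\dif\bm{x},
\]
again by Fubini, i.e.\ $\mathbb{E}_{p_t(\bm{x}|\bm{x}_1)p_{\mathrm{data}}(\bm{x}_1)}\langle\bm{v}_{\bm{\theta}}(\bm{x},t),\bm{v}_t(\bm{x}|\bm{x}_1)\rangle$. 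Integrating the two identities over $t\in[0,1]$ and recombining the expansions, $\mathcal{L}_{\textrm{RFM}}(\bm{\theta})-\mathcal{L}_{\textrm{CRFM}}(\bm{\theta})$ is a constant independent of $\bm{\theta}$, hence the gradients coincide once $\nabla_{\bm{\theta}}$ is interchanged with the $t$-, $\bm{x}$-, and $\bm{x}_1$-integrals.

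The main obstacle is not conceptual but one of regularity and integrability: justifying the Fubini interchanges and the differentiation under the integral sign. This requires mild assumptions (for instance $\bm{v}_{\bm{\theta}}$ smooth in $\bm{\theta}$ with locally bounded $\bm{\theta}$-derivatives, and $\bm{v}_t(\cdot\,|\bm{x}_1)$, $p_t(\cdot\,|\bm{x}_1)$ bounded on $\Omega$ uniformly in $t$), all of which hold here precisely because $\Omega$ is compact. The constrained-domain aspect enters only through restricting every integral to $\Omega$ and using $p_t>0$ on $\Omega^o$ so that $\bm{v}_t$ is defined $p_t$-a.e.; beyond that, the argument is identical to the unconstrained case of \citet{lipman2023FM}.
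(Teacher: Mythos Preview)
Your proposal is correct and follows essentially the same route as the paper's proof: expand the squared norms, match the quadratic term via the marginalization \eqref{eq:marginal-probability-path}, match the cross term via the definition \eqref{eq:marginal-velocity-field}, and conclude that the two losses differ by a $\bm{\theta}$-independent constant. The paper's argument is slightly terser and does not spell out the Fubini or differentiation-under-the-integral justifications you flag, but the substance is identical.
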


Moreover, we prove the following Wasserstein bound for the reflected CNFs trained with RFM (see Appendix \ref{proof:wasserstein} for proof), similar to  \citet[Proposition 3]{albergo2022stochasticinterpolant}.
\begin{theorem}[Wasserstein Bound]\label{thm:wasserstein}
Assume $\Omega$ is convex and the velocity model $\bm{v}_{\bm{\theta}}(\bm{x},t)$ is $M$-Lipschitz in $\bm{x}$ (uniformly on $t$). 
Let $p_{\bm{\theta},t}(\bm{x})$ be the probability path of the reflected CNF induced by $\bm{v}_{\bm{\theta}}(\bm{x},t)$ starting from the same prior distribution $p_0(\bm{x})$.
Then the squared Wasserstein-2 distance between $p_{1}(\bm{x})$ and $p_{\bm{\theta},1}(\bm{x})$ is bounded by
\begin{equation}
W_2^2\left(p_{1}(\bm{x}),p_{\bm{\theta},1}(\bm{x})\right)\leq e^{1+2M}\mathcal{L}_{\textrm{RFM}}(\bm{\theta}).
\end{equation}
\end{theorem}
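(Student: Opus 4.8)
The plan is to control $W_2^2(p_1,p_{\bm{\theta},1})$ through a \emph{synchronous coupling} of the two reflected flows followed by a Grönwall estimate, in the spirit of \citet[Proposition 3]{albergo2022stochasticinterpolant} but with extra care for the reflection term. Let $\bm{\phi}_t(\bm{x})$ solve the reflected ODE \eqref{eq:reflected-particle-ODE} driven by the true marginal velocity field $\bm{v}_t$, and let $\bm{\phi}_{\bm{\theta},t}(\bm{x})$ solve it with $\bm{v}_t$ replaced by $\bm{v}_{\bm{\theta}}$; crucially, both are started from the \emph{same} random point $\bm{x}\sim p_0$ and driven by the \emph{same} boundary-reflection device (the Skorokhod map for $\Omega$). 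Since $\bm{v}_t$ generates $p_t$ via the continuity equation \eqref{eq:continuity} (Theorem \ref{thm:probability-path}), we have $\bm{\phi}_1(\bm{x})\sim p_1$, while $\bm{\phi}_{\bm{\theta},1}(\bm{x})\sim p_{\bm{\theta},1}$ by definition; hence $\bm{x}\mapsto(\bm{\phi}_1(\bm{x}),\bm{\phi}_{\bm{\theta},1}(\bm{x}))$ is an admissible coupling of $p_1$ and $p_{\bm{\theta},1}$, so it suffices to bound
\[
e(t):=\mathbb{E}_{p_0(\bm{x})}\left\|\bm{\phi}_t(\bm{x})-\bm{\phi}_{\bm{\theta},t}(\bm{x})\right\|^2 \;\ge\; W_2^2\left(p_t,p_{\bm{\theta},t}\right)
\]
at $t=1$.

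Writing $\bm{\delta}_t=\bm{\phi}_t(\bm{x})-\bm{\phi}_{\bm{\theta},t}(\bm{x})$, which is continuous and of bounded variation in $t$, the product rule (read as a Stieltjes identity) gives
\[
\left\|\bm{\delta}_t\right\|^2=2\int_0^t\langle\bm{\delta}_s,\bm{v}_s(\bm{\phi}_s)-\bm{v}_{\bm{\theta}}(\bm{\phi}_{\bm{\theta},s},s)\rangle\,\dif s+2\int_0^t\langle\bm{\delta}_s,\dif\bm{L}_s-\dif\bm{L}_{\bm{\theta},s}\rangle.
\]
For the drift integrand I would add and subtract $\bm{v}_{\bm{\theta}}(\bm{\phi}_s,s)$, bound $\langle\bm{\delta}_s,\bm{v}_{\bm{\theta}}(\bm{\phi}_s,s)-\bm{v}_{\bm{\theta}}(\bm{\phi}_{\bm{\theta},s},s)\rangle\le M\|\bm{\delta}_s\|^2$ by the $M$-Lipschitz assumption, and apply $2\langle a,b\rangle\le\|a\|^2+\|b\|^2$ to the remaining inner product, obtaining the pointwise bound $(1+2M)\|\bm{\delta}_s\|^2+\|\bm{v}_s(\bm{\phi}_s)-\bm{v}_{\bm{\theta}}(\bm{\phi}_s,s)\|^2$. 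For the reflection integral, which is the heart of the argument, I would show it is nonpositive: whenever $\dif\bm{L}_s\neq 0$ the point $\bm{\phi}_s$ lies on $\partial\Omega$ and $\dif\bm{L}_s$ points along the inward normal $-\bm{n}(\bm{\phi}_s)$ with nonnegative magnitude, so by the supporting-hyperplane inequality for the convex set $\Omega$ (valid even at nonsmooth boundary points, with $\bm{n}$ an outward supporting direction) $\langle\bm{\phi}_{\bm{\theta},s}-\bm{\phi}_s,\bm{n}(\bm{\phi}_s)\rangle\le 0$, i.e.\ $\langle\bm{\delta}_s,\dif\bm{L}_s\rangle\le 0$; symmetrically, $\langle\bm{\delta}_s,-\dif\bm{L}_{\bm{\theta},s}\rangle=\langle\bm{\phi}_{\bm{\theta},s}-\bm{\phi}_s,\dif\bm{L}_{\bm{\theta},s}\rangle\le 0$ using $\bm{\phi}_s\in\Omega$. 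Hence the reflection terms only help, and
\[
\left\|\bm{\delta}_t\right\|^2\le\int_0^t\left[(1+2M)\left\|\bm{\delta}_s\right\|^2+\left\|\bm{v}_s(\bm{\phi}_s)-\bm{v}_{\bm{\theta}}(\bm{\phi}_s,s)\right\|^2\right]\dif s.
\]

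Taking expectations over $\bm{x}\sim p_0$ and using that $\bm{\phi}_s(\bm{x})\sim p_s$, the last term contributes $r(s):=\mathbb{E}_{p_s(\bm{x})}\|\bm{v}_{\bm{\theta}}(\bm{x},s)-\bm{v}_s(\bm{x})\|^2$ with $\int_0^1 r(s)\,\dif s=\mathcal{L}_{\textrm{RFM}}(\bm{\theta})$, so $e(t)\le\int_0^t[(1+2M)e(s)+r(s)]\,\dif s$ with $e(0)=0$. The integral form of Grönwall's inequality then yields $e(1)\le\int_0^1 e^{(1+2M)(1-s)}r(s)\,\dif s\le e^{1+2M}\mathcal{L}_{\textrm{RFM}}(\bm{\theta})$, which combined with $W_2^2(p_1,p_{\bm{\theta},1})\le e(1)$ finishes the proof. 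I expect the main obstacle to be making the reflection manipulation fully rigorous: $\bm{L}_t$ and $\bm{L}_{\bm{\theta},t}$ are merely of bounded variation, so one must justify the Stieltjes product rule for continuous BV paths, identify the measure $\dif|\bm{L}_t|$ as concentrated on $\{s:\bm{\phi}_s\in\partial\Omega\}$ and directed along inward normals (properties of the Skorokhod reflection problem under the exterior-sphere and uniform-cone conditions of Theorem \ref{thm:existence}), and invoke convexity of $\Omega$ through supporting hyperplanes rather than literal normal vectors at irregular boundary points. A secondary, more routine point is the well-posedness of the true reflected flow $\bm{\phi}_t$, which we borrow from the hypotheses underlying Theorem \ref{thm:existence}.
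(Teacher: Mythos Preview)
Your proposal is correct and follows essentially the same route as the paper: a synchronous coupling $\bm{x}\mapsto(\bm{\phi}_t(\bm{x}),\bm{\phi}_{\bm{\theta},t}(\bm{x}))$, the same add-and-subtract of $\bm{v}_{\bm{\theta}}(\bm{\phi}_s,s)$ combined with the $M$-Lipschitz bound and $2\langle a,b\rangle\le\|a\|^2+\|b\|^2$ to obtain the $(1+2M)$ coefficient, the convexity/supporting-hyperplane argument to discard the reflection cross-terms, and Gr\"onwall. Your discussion of the Stieltjes product rule and of normals at nonsmooth boundary points is, if anything, slightly more careful than the paper's sketch (which cites \citet{lamperski2021projected} for the rigorous reflection argument).
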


\subsection{Construction of Conditional Velocity Fields}

\begin{figure}
    \centering
    \includegraphics[width=\linewidth]{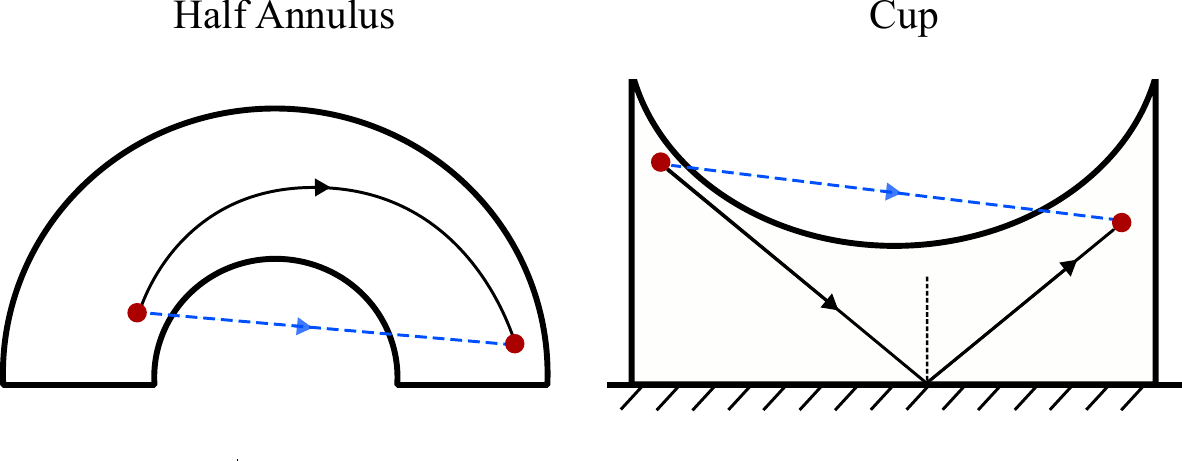}
    \caption{Illustration of the conditional velocity fields on two nonconvex domains: half annulus (left) and cup (right).
    The black solid curve is the designed conditional velocity field within the domain.
    The OT conditional velocity field is represented by the blue dashed segment which violates the domain constraint.
    }
    \label{fig:rfm-illustration}
\end{figure}
One remaining thing is the choice of the conditional probability path $p_t(\bm{x}|\bm{x}_1)$ and the conditional velocity field $\bm{v}_t(\bm{x}|\bm{x}_1)$ on $\Omega$.
Based on reflected ODEs, both of them can be simply derived from a conditional flow $\bm{\phi}_t(\bm{x}|\bm{x}_1)$, which satisfies that $\bm{\phi}_0(\bm{x}|\bm{x}_1)=\bm{x}$, $\bm{\phi}_1(\bm{x}|\bm{x}_1)\approx\bm{x}_1$, and $\bm{\phi}_t(\bm{x}|\bm{x}_1)\in \Omega$ for all $\bm{x},\bm{x}_1\in \Omega$ and $t\in[0,1]$.
In fact, with a data sample $\bm{x}_1$ from $p_{\mathrm{data}}(\cdot)$, 
\begin{subequations}
\arraycolsep=1.8pt
\def\arraystretch{1.5}
\begin{align}
\bm{v}_t(\bm{x}|\bm{x}_1) &=\bm{\phi}'_t(\bm{\phi}^{-1}_t(\bm{x}|\bm{x}_1)|\bm{x}_1),\\
p_t(\bm{x}|\bm{x}_1)&=p_0(\bm{\phi}^{-1}_t(\bm{x}|\bm{x}_1))\left|\mathrm{det}\left(\nabla_{\bm{x}}\bm{\phi}^{-1}_t(\bm{x}|\bm{x}_1)\right)\right|,
\end{align}
\end{subequations}
naturally satisfy the continuity equation \eqref{eq:continuity}.
Here we provide two concrete examples.

\begin{example}[Convex Domain]\label{example:convex}
Let $\Omega$ be a convex domain. Using the same idea of OT conditional velocity field, we set
\begin{equation}\label{eq:OT-probability-paths}
\bm{\phi}_t(\bm{x}|\bm{x}_1) = (1-(1-\sigma_{\min})t)\bm{x} + (1-\sigma_{\min})t\bm{x}_1,
\end{equation}
where $\sigma_{\min}$ is a sufficiently small number such that the resulting $p_1(\cdot|\bm{x}_1)$ is concentrated around $\bm{x}_1$.
Note that equation \eqref{eq:OT-probability-paths} is different from the vanilla OT conditional velocity field in FM in that it has a coefficient $(1-\sigma_{\min})$ on $\bm{x}_1$ to ensure that $\bm{\phi}_t(\bm{x}|\bm{x}_1)\in\Omega$ due to the convexity of $\Omega$.
A typical example of convex domains is the convex polytope defined as
\begin{equation}\label{eq:convex-polytope}
\Omega = \left\{\bm{u}\,|\, \bm{A}\bm{u}< \bm{b}\right\}\subset \mathbb{R}^d,
\end{equation}
where $\bm{A}\in \mathbb{R}^{m\times d}$, $\bm{b}\in \mathbb{R}^m$, and $m$ is the number of linear constraints.
By taking $\bm{A} = [\bm{I}_d,-\bm{I}_d]^\prime$ and $\bm{b}=\bm{1}_{2d}$ where $\bm{I}_d$ is the $d$-dimensional identity matrix and $\bm{1}_{2d}$ is an all-ones vector of length $2d$, the corresponding constrained domain becomes a hypercube $\Omega=[-1,1]^d$, which is frequently encountered in image generation tasks.
\end{example}

\begin{example}[Half Annulus]\label{example:ring}
Consider a two-dimensional half annulus area
\begin{equation}
    \Omega = \{(x_1,x_2)\, |\, r^2\leq x_1^2+x_2^2\leq R^2,x_2\geq 0\}\subset \mathbb{R}^2
\end{equation}
with $0<r<R$. 
Let $\bm{x}=(\|\bm{x}\|\cos\alpha,\|\bm{x}\|\sin\alpha)'$ and $\bm{x}_1=(\|\bm{x}_1\|\cos\alpha_1,\|\bm{x}_1\|\sin\alpha_1)'$ where $\alpha,\alpha_1 \in [0,\pi]$.
The conditional flow that (approximately) connects $\bm{x}$ and $\bm{x}_1$ can be given by $\bm{\phi}_t(\bm{x}|\bm{x}_1)=(x_t\cos\alpha_t,x_t\sin\alpha_t)'$ where
\begin{subequations}
\label{eq:ring-conditional-probability-path}
\begin{align}
x_t &= (1-(1-\sigma_{\min})t)\|\bm{x}\| + (1-\sigma_{\min})t\|\bm{x}_1\|,\\
\alpha_t& = (1-(1-\sigma_{\min})t)\alpha + (1-\sigma_{\min})t\alpha_1.
\end{align}
\end{subequations}
It is easy to verify that $\bm{\phi}_t(\bm{x}|\bm{x}_1)\in\Omega$ for all $\bm{x}$, $\bm{x}_1$, and $t$.
Intuitively, equation \eqref{eq:ring-conditional-probability-path} forms an OT conditional velocity field that (approximately) pushes $\bm{x}$ to $\bm{x}_1$ in the polar coordinates.
See the left plot in  Figure \ref{fig:rfm-illustration} for an illustration.
\end{example}

Compared to RDMs \citep{lou2023reflectedDM}, the superiority of RFM can be summarized in three key aspects: i) one can efficiently sample from $p_t(\bm{x}|\bm{x}_1)$ through the analytical conditional flow $\bm{\phi}_{t}(\bm{x}|\bm{x}_1)$ to obtain the training data in $\mathcal{L}_{\textrm{CRFM}}(\bm{\theta})$ without requiring additional geometric techniques;
ii) the conditional velocity field $\bm{v}_{t}(\bm{x}|\bm{x}_1)$ has an analytical form, eliminating the need for potentially complicated approximations;
iii) in principle, the prior distribution can take any distribution on arbitrary domain $\Omega$, thereby enhancing the flexibility of reflected CNFs.

\begin{algorithm}[t]
\caption{Sampling from Reflected CNFs}
\label{alg:sampling}
\begin{algorithmic}
\STATE {\bfseries Input:} A velocity model $\bm{v}_\theta(\bm{x},t)$; the prior distribution $p_0(\cdot)$; total number of discretization steps $K$; an increasing time sequences $\{\gamma_k\}_{k=0}^K$ such that $\gamma_0=0$ and $\gamma_K=1$; an one-step ODE solver  $\mathrm{SOLVER}(\cdot)$; a reflection function $\mathrm{Refl}_{\partial\Omega}(\cdot)$.
\STATE {\bfseries Output:} A sample $\bm{x}_1$ from the reflected CNF.
\STATE Sample a point $\bm{x}_0$ from $p_0(\cdot)$;
\FOR{$k=1$ \textbf{to} $K$}
\STATE $\bar{\bm{x}}_{\gamma_k} \leftarrow \mathrm{SOLVER}\left(\bm{x}_{\gamma_{k-1}}, \gamma_{k-1},\gamma_k,\bm{v}_\theta\right)$;
\STATE $(\Delta, \bm{\alpha}, \bm{y}) \leftarrow\left(\|\bar{\bm{x}}_{\gamma_k}-\bm{x}_{\gamma_{k-1}}\|, \frac{\bar{\bm{x}}_{\gamma_k}-\bm{x}_{\gamma_{k-1}}}{\|\bar{\bm{x}}_{\gamma_k}-\bm{x}_{\gamma_{k-1}}\|}, \bm{x}_{\gamma_{k-1}}\right)$;
\WHILE{TRUE}
\STATE Calculate the first intersection $\bm{y}^\prime$ of the ray $\bm{y}+\bm{\alpha}s (s\geq 0)$ and the boundary $\partial \Omega$;
\STATE $\Delta'\leftarrow\|\bm{y}'-\bm{y}\|$;
\IF{$\Delta\leq \Delta'$}
\STATE Stop the inner loop;
\ENDIF
\STATE $(\Delta, \bm{\alpha}, \bm{y}) \leftarrow 
\left(\Delta-\Delta', \mathrm{Refl}_{\partial\Omega}(\bm{\alpha},\bm{y}'), \bm{y}'\right)$;
\ENDWHILE
\STATE $\bm{x}_{\gamma_{k}} \leftarrow \bm{y}+\bm{\alpha}\Delta \in \Omega$;
\ENDFOR
\end{algorithmic}
\end{algorithm}

\subsection{Sampling from Reflected CNFs}\label{sec:sampling}
To generate a sample from the learned reflected CNFs, we start from the prior distribution and simulate the reflected ODE \eqref{eq:reflected-particle-ODE} where $\bm{v}_t(\bm{x})$ is replaced by $\bm{v}_{\theta}(\bm{x},t)$.
Assume an increasing sequence $0= \gamma_0<\cdots<\gamma_K=1$ of the interpolation time points for ODE simulation.
Here, we only discuss the simulation methods with fixed step sizes, and those with adaptive step sizes can be derived similarly.

First of all, we sample a point $\bm{x}_0$ from the prior distribution $p_0(\cdot)$ as our initialization as $t=0$.
In the $k$-th step, given the current position $\bm{x}_{\gamma_{k-1}}$, we need to calculate the next position $\bm{x}_{\gamma_{k}}$.
For ODEs without spatial constraints, this one-step update can be achieved by many widely-used algorithms, e.g., Euler method, Runge-Kutta method, etc. These algorithms can generally be represented by
\begin{equation}
\bar{\bm{x}}_{\gamma_k} = \mathrm{SOLVER}\left(\bm{x}_{\gamma_{k-1}}, \gamma_{k-1},\gamma_k,\bm{v}_\theta\right),
\end{equation}
where the ODE solver $\mathrm{SOLVER}(\cdot)$ takes the current position $\bm{x}_{\gamma_{k-1}}$, the starting and ending time $\gamma_{k-1},\gamma_k$, and a velocity field $\bm{v}_\theta$ as inputs.
However, if the ODE solver does not account for the boundary constraints, the next position $\bar{\bm{x}}_{\gamma_k}$ may extend beyond the constrained domain $\Omega$.

Inspired by the geometry implication of the reflection term $\mathrm{d}\bm{L}_t$, we handle the case $\bar{\bm{x}}_{\gamma_k}\notin\Omega$ by applying reflections iteratively to the segment from $\bm{x}_{\gamma_{k-1}}$ to $\bar{\bm{x}}_{\gamma_k}$. 
Concretely, let $\Delta=\|\bar{\bm{x}}_{\gamma_k}-\bm{x}_{\gamma_{k-1}}\|$ and $\bm{\alpha}= \frac{\bar{\bm{x}}_{\gamma_k}-\bm{x}_{\gamma_{k-1}}}{\|\bar{\bm{x}}_{\gamma_k}-\bm{x}_{\gamma_{k-1}}\|}\in\mathbb{R}^d$ be the length and direction of the one-step update, and $\bm{y}=\bm{x}_{\gamma_{k-1}}$ is the current position. 
We first calculate the first intersection $\bm{y}^\prime$ of the ray $\bm{y}+\bm{\alpha}s (s\geq 0)$ and the boundary $\partial \Omega$.
If $\|\bm{y}'-\bm{y}\|=:\Delta'<\Delta$, we continue the iteration and update the triplet $(\Delta, \bm{\alpha}, \bm{y})$ by
\begin{equation}\label{eq:triple-update}
(\Delta, \bm{\alpha}, \bm{y}) \leftarrow 
\left(\Delta-\Delta', \mathrm{Refl}_{\partial\Omega}(\bm{\alpha},\bm{y}'), \bm{y}'\right),
\end{equation}
where $\mathrm{Refl}_{\partial\Omega}(\bm{\alpha},\bm{y}')$ gives the reflected velocity of $\bm{\alpha}$ at $\bm{y}'$ on $\partial\Omega$; otherwise, we stop the iteration and calculate
\begin{equation}
\bm{x}_{\gamma_{k}} = \bm{y}+\bm{\alpha}\Delta \in \Omega
\end{equation}
as the next position.
The whole procedure for sampling from reflected CNFs is summarized in Algorithm \ref{alg:sampling}.

Note that for certain special domains $\Omega$, it is possible to directly compute the next position instead of applying reflections iteratively.
For example, for $\Omega=[-1,1]^d$, we can directly compute $\bm{x}_{\gamma_k}=1-|(\bar{\bm{x}}_{\gamma_k}+1)\,\mathrm{mod}\, 4-2|$.
This observation is crucial to efficient sampling in image generation tasks.

\paragraph{Flow Guidance}
Inspired by the classifier-free guidance for score-based conditional generation (Section \ref{sec:guidance-diffusion}), we consider a similar scheme for guided generation based on the reflected ODE \eqref{eq:reflected-particle-ODE}.
With a $c$-conditioned velocity field $\bm{v}_t(\bm{x}|c)$ which pushes the prior distribution $p_0(\bm{x}|c)=p_0(\bm{x})$ to $p_1(\bm{x}|c)$, we defines the \textit{guided velocity field} as
\begin{equation}\label{eq:guided-velocity-field}
\tilde{\bm{v}}_{t}(\bm{x}|c) = w\bm{v}_t(\bm{x}|c) + (1-w)\bm{v}_t(\bm{x}|\emptyset),
\end{equation}
where $w$ is the guidance weight and $\bm{v}_t(\bm{x}|\emptyset)$ is the unconditioned velocity with $c$ set to the empty token.
A similar idea is also considered in \citet{dao2023latentFM, Zheng2023GuidedFF}.
According to the reflected ODE, the \textit{guided probability path} $\tilde{p}_t(\bm{x}|c)$ is then defined implicitly by the solution to the following continuity equation with the Neumann boundary condition
\begin{subequations}
\arraycolsep=1.8pt
\def\arraystretch{1.5}
\begin{align}
\frac{\partial }{\partial t} \tilde{p}_t(\bm{x}|c) = -\nabla_{\bm{x}}\cdot\left(\tilde{\bm{v}}_{t}(\bm{x}|c) \tilde{p}_t(\bm{x}|c)\right),&\  \bm{x}\in\Omega^o,\\
\tilde{p}_t(\bm{x}|c)\tilde{\bm{v}}_{t}(\bm{x}|c)\cdot  \bm{n}(\bm{x}) = 0,&\ \bm{x}\in\partial\Omega,
\end{align}
\end{subequations}
where $\bm{n}(\bm{x})$ is the outward normal direction at $\bm{x}\in\partial\Omega$.
In particular, one has $\tilde{p}_t(\bm{x}|c)=p_t(\bm{x})$ or $\tilde{p}_t(\bm{x}|c)=p_t(\bm{x}|c)$ if $w=0$ or $w=1$ respectively.

The following theorem implies that in the vanilla FM setting, the velocity field is connected to that in the ODE counterpart of the reversed SDE (i.e., $\lambda=0$ in equation \eqref{eq:reverse-sde}), establishing an equivalence between equation \eqref{eq:score-classifier} and \eqref{eq:guided-velocity-field}.
\begin{theorem}\label{thm:guided-velocity-field}
Let $\Omega=\mathbb{R}^d$ and the prior distribution $p_0(\bm{x})$ be a standard Gaussian distribution. Assume the OT conditional velocity field for FM, i.e., $\bm{\phi}_{t}(\bm{x}|\bm{x}_1)=t\bm{x}_1+(1-(1-\sigma_{\min})t)\bm{x}$.
Then
\begin{equation}
\bm{v}_t(\bm{x})=\frac{1}{t}\bm{x}+\frac{1-(1-\sigma_{\min})t}{t}\nabla\log p_{t}(\bm{x}).
\end{equation}
\end{theorem}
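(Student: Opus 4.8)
The plan is to reduce the marginal velocity field to the marginal score by exploiting the affine-in-$\bm{x}_1$ structure of the OT conditional flow. First I would record the relevant conditional quantities: under $\bm{\phi}_t(\bm{x}|\bm{x}_1)=t\bm{x}_1+\sigma_t\bm{x}$ with $\sigma_t := 1-(1-\sigma_{\min})t$ and $\bm{x}\sim\mathcal{N}(\bm{0},\bm{I}_d)$, the conditional probability path is the Gaussian $p_t(\bm{x}|\bm{x}_1)=\mathcal{N}(\bm{x};t\bm{x}_1,\sigma_t^2\bm{I}_d)$, its score is $\nabla_{\bm{x}}\log p_t(\bm{x}|\bm{x}_1)=-(\bm{x}-t\bm{x}_1)/\sigma_t^2$, and by \eqref{eq:fm-conditional-velocity-field} the conditional velocity field is $\bm{v}_t(\bm{x}|\bm{x}_1)=-\tfrac{1-\sigma_{\min}}{\sigma_t}(\bm{x}-t\bm{x}_1)+\bm{x}_1$.

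The key algebraic step is to rewrite $\bm{v}_t(\bm{x}|\bm{x}_1)$ purely in terms of $\bm{x}$ and the displacement $\bm{x}-t\bm{x}_1$, eliminating the explicit dependence on $\bm{x}_1$. Using $\bm{x}_1=\tfrac1t\bm{x}-\tfrac1t(\bm{x}-t\bm{x}_1)$ gives $\bm{v}_t(\bm{x}|\bm{x}_1)=\tfrac1t\bm{x}-(\tfrac{1-\sigma_{\min}}{\sigma_t}+\tfrac1t)(\bm{x}-t\bm{x}_1)$; substituting $\bm{x}-t\bm{x}_1=-\sigma_t^2\nabla_{\bm{x}}\log p_t(\bm{x}|\bm{x}_1)$ turns the scalar coefficient into $\sigma_t^2(\tfrac{1-\sigma_{\min}}{\sigma_t}+\tfrac1t)=\tfrac{\sigma_t}{t}\big(t(1-\sigma_{\min})+\sigma_t\big)$, and the identity $t(1-\sigma_{\min})+\sigma_t=1$ collapses this to $\sigma_t/t$. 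Hence $\bm{v}_t(\bm{x}|\bm{x}_1)=\tfrac1t\bm{x}+\tfrac{\sigma_t}{t}\nabla_{\bm{x}}\log p_t(\bm{x}|\bm{x}_1)$, with coefficients that no longer involve $\bm{x}_1$; this is the structural fact that makes the argument go through.

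Finally I would substitute this identity into the marginalization formula \eqref{eq:fm-marginal-velocity-field}. The $\tfrac1t\bm{x}$ term contributes $\tfrac1t\bm{x}$ because $\int p_t(\bm{x}|\bm{x}_1)p_{\mathrm{data}}(\bm{x}_1)\dif\bm{x}_1=p_t(\bm{x})$; for the remaining term I would use the standard fact that $\nabla\log p_t(\bm{x})=\int \nabla_{\bm{x}}\log p_t(\bm{x}|\bm{x}_1)\,\tfrac{p_t(\bm{x}|\bm{x}_1)p_{\mathrm{data}}(\bm{x}_1)}{p_t(\bm{x})}\dif\bm{x}_1$, which follows by differentiating $p_t(\bm{x})=\int p_t(\bm{x}|\bm{x}_1)p_{\mathrm{data}}(\bm{x}_1)\dif\bm{x}_1$ under the integral sign and dividing by $p_t(\bm{x})$. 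Adding the two pieces yields $\bm{v}_t(\bm{x})=\tfrac1t\bm{x}+\tfrac{1-(1-\sigma_{\min})t}{t}\nabla\log p_t(\bm{x})$, as claimed.

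There is no deep obstacle here: the only real content is spotting that the conditional velocity field can be written in ``score form'' with $\bm{x}_1$-free coefficients (the identity $t(1-\sigma_{\min})+\sigma_t=1$ being exactly what is needed), after which marginalization is immediate. The one technical point worth a remark is the interchange of gradient and integral in the score identity, which is justified by the Gaussian decay of $p_t(\cdot|\bm{x}_1)$ together with mild regularity/integrability of $p_{\mathrm{data}}$ (dominated convergence); I would state this as a standing assumption rather than dwell on it.
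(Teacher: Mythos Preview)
Your proposal is correct and follows essentially the same route as the paper: both rewrite the conditional velocity as $\tfrac{1}{t}\bm{x}+\tfrac{\sigma_t}{t}\nabla_{\bm{x}}\log p_t(\bm{x}|\bm{x}_1)$ (the paper via a one-line algebraic rearrangement of $\tfrac{\bm{x}_1-(1-\sigma_{\min})\bm{x}}{\sigma_t}$, you via the substitution $\bm{x}_1=\tfrac1t\bm{x}-\tfrac1t(\bm{x}-t\bm{x}_1)$), and then apply the Vincent-style identity $\mathbb{E}_{\bar p_t(\bm{x}_1|\bm{x})}\nabla\log p_t(\bm{x}|\bm{x}_1)=\nabla\log p_t(\bm{x})$ to pass to the marginal.
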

The proof of Theorem \ref{thm:guided-velocity-field} can be found in Appendix \ref{proof:guided-velocity-field}.
\begin{figure*}[t]
    \centering
    \includegraphics[width=\linewidth]{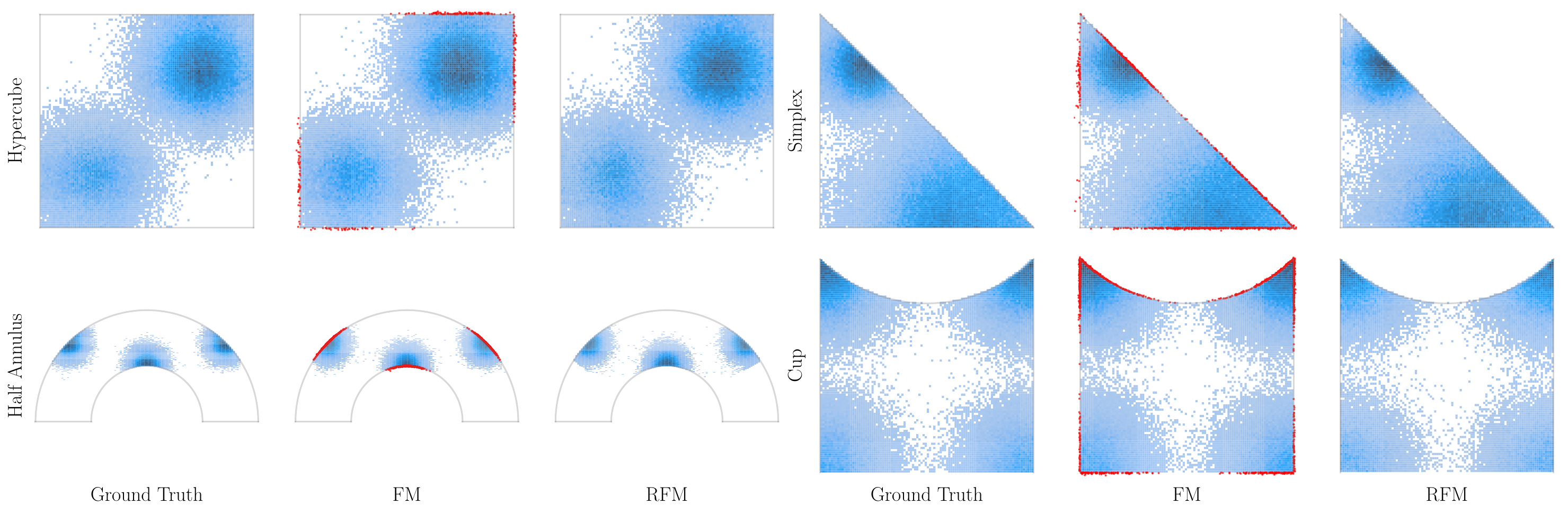}
    \caption{The histplots of samples obtained by different methods compared to the ground truth on the two-dimensional hypercube, simplex, and cup data set.
    Samples out of the constrained domain are plotted with red dots.
    The total sample size is 100,000.
    }
    \label{fig:histplot2D-heun3}
\end{figure*}

\begin{table*}[t]
\centering 
\caption{KL divergences to the ground truth obtained by different methods on low-dimensional generation tasks.
For each method, we calculate the KL divergence using the Python ITE module \citep{szabo2014ITE} with a sample size of 50,000.
The results are averaged over 10 independent runs with standard deviation in the brackets.
}
\label{tab:KL2D-heun3}
\vspace{0.5em}
\begin{tabular}{lcccccc}
\toprule
\multirow{2}{*}{Method}&\multicolumn{2}{c}{Hypercube}&\multicolumn{2}{c}{Simplex}&Half Annulus&Cup \\ 
\cmidrule(l){2-3}\cmidrule(l){4-5}\cmidrule(l){6-6}\cmidrule(l){7-7}
&$d=2$&$d=10$&$d=2$&$d=10$&$d=2$&$d=2$\\
\midrule
RDM&0.0110(0.0003)&0.4798(0.0006)&0.0055(0.0001)&0.7613(0.0036)& N/A &N/A\\
FM&0.0044(0.0010)&\textbf{0.0224(0.0010)}&0.0062(0.0011)&0.0522(0.0022)&0.0083(0.0008)&0.0122(0.0012)\\
FM$^\ast$&\textbf{0.0021(0.0010)}&0.0257(0.0020)&0.0045(0.0009)&0.0460(0.0022)&0.0085(0.0011)&0.0118(0.0020)\\
RFM&\textbf{0.0021(0.0011)}&0.0243(0.0011)&\textbf{0.0027(0.0011)}&\textbf{0.0452(0.0023)}&\textbf{0.0038(0.0009)}&\textbf{0.0058(0.0010)}\\
\bottomrule
\end{tabular}
\end{table*}

\begin{table}[t]
\centering 
\setlength\tabcolsep{3pt}
\caption{Constraint violation ratio (\textperthousand) of different methods on low-dimensional generation tasks.
We collect 500,000 samples to calculate the constraint violation ratio.
}
\label{tab:outier2D-heun3}
\vspace{0.5em}
\resizebox{\linewidth}{!}{
\begin{tabular}{lcccccc}
\toprule
\multirow{2}{*}{Method}&\multicolumn{2}{c}{Hypercube}&\multicolumn{2}{c}{Simplex}&Half Annulus&Cup \\ 
\cmidrule(l){2-3}\cmidrule(l){4-5}\cmidrule(l){6-6}\cmidrule(l){7-7}
&$d=2$&$d=10$&$d=2$&$d=10$&$d=2$&$d=2$\\
\midrule
FM&3.1&26.9&9.0&86.2&10.4&13.9\\
FM$^\ast$&0.2&8.6&5.9&33.7&10.2&10.2\\
RFM&0.0&0.0&0.0&0.0&0.0&0.0\\
\bottomrule
\end{tabular}
}
\end{table}

\section{Experiments}

\subsection{Low-dimensional Toy Examples}

We first test the effectiveness of RFM for modeling probability distributions on low-dimensional constrained domains, including hypercube, simplex, half annulus, and cup.

\paragraph{Hypercube and Simplex} 
We consider the $d$-dimensional hypercube $\Omega=[-1,1]^d$ and the $d$-dimensional simplex
\[
\Omega=\left\{(x_1,\ldots,x_d)'\left|x_1\geq 0,\ldots,x_d\geq 0,\sum_{i=1}^dx_i\leq 1\right.\right\}.
\]
Both domains are convex and satisfy the condition in Example \ref{example:convex}.
We consider two cases here: $d=2$ and $d=10$.
For RFM, we use the OT conditional velocity field in equation \eqref{eq:OT-probability-paths} with $\sigma_{\min}=10^{-5}$.

\paragraph{Half Annulus} 
We consider a two-dimensional half annulus area which is defined in Example \ref{example:ring} and choose the conditional velocity field in equation \eqref{eq:ring-conditional-probability-path} with $\sigma_{\mathrm{min}}=10^{-5}$ for RFM.

\paragraph{Cup}
We also consider a non-convex two-dimensional cup area defined as 
\[
\Omega=\{(x_1,x_2)'| -1\leq x_1\leq 1, x_2\geq 0, x_1^2+(x_2-3)^2\leq 2\}.
\]
Let $\bm{x}=(x_1,x_2)$ and $\bm{x}_1=(x_{1,1}, x_{1,2})$ be the prior sample and data sample.
We define the conditional flow as
\[
\bm{\phi}_t(\bm{x}|\bm{x}_1) = \left((1-t)x_1+tx_{1,1},|(1-t)x_2-tx_{1,2}|\right)',
\]
which is a broken line connecting $\bm{x}$ and $\bm{x}_1$ with one reflection at the $x$-axis (Figure \ref{fig:rfm-illustration}, right plot).

For all these domains above, we set the prior distribution $p_{0}(\cdot)$ to the standard Gaussian distribution for CNFs (trained with FM) and the uniform distribution over the constrained domain $\Omega$ for reflected CNFs (trained with RFM).
As an additional baseline, the CNFs (trained with FM) where $p_0(\cdot)$ is a uniform distribution are also considered and denoted by FM$^\ast$.
We choose the target distribution $p_{\mathrm{data}}(\cdot)$ as the mixture of Gaussians which is truncated by the domain $\Omega$ \citep{fishman2023diffusionCD}.
On all domains, the OT conditional velocity field with $\sigma_{\mathrm{min}}=10^{-5}$ is chosen for FM.
The results are collected after 200,000 iterations with a batch size of 512.
We use the popular fixed-step third-order Heun algorithm \citep{chen2018neuralode} and set the number of function evaluations (NFE) as 300 to sample from the CNFs and the reflected CNFs.
We also consider the RDM baseline implemented by \citet{fishman2023diffusionCD} and generate the samples using the Euler-Maruyama algorithm \citep{platen1992em} with 300 NFE.
See more details of implementation in Appendix \ref{app:implimentation-details-toy}.

Table \ref{tab:KL2D-heun3} reports the approximation accuracies measured by the Kullback–Leibler (KL) divergences to the ground truth obtained by different methods.
We find that in all cases, RFM performs on par or better than FM and FM$^\ast$ and consistently outperforms RDM.
Interestingly, the results of RDM get worse in the 10-dimensional cases, partially due to the requirement for more discretization steps.
One can also see from Table \ref{tab:outier2D-heun3} that a large proportion of samples violating the constraints are generated by FM and FM$^\ast$, while our RFM enjoys zero constraint violation ratio by design.
This demonstrates the effectiveness of the reflection operation in CNFs.
Interestingly, the boundary violation ratio of CNFs can be somewhat reduced by employing a uniform prior distribution, as evidenced by the FM$^\ast$ results.

Figure \ref{fig:histplot2D-heun3} shows the histplots of samples obtained by different methods.
We see that both CNFs and reflected CNFs (trained by FM and RFM respectively) can generate samples similar to the data distributions.
Moreover, samples from reflected CNFs all stay within the domains while those from CNFs can violate the constraints (the red dots).
It is worth noting that the conditional velocity fields of RFM on the nonconvex half annulus and cup areas work well, although they are not optimal transports.
More results about alternative ODE solvers and learned velocity fields can be found in Appendix \ref{app:exp-toy-example}.

\begin{table}[t]
\centering 
\caption{Sample quality measured by FID score ($\downarrow$) obtained by different methods on CIFAR-10 ($32\times 32$). 
Methods for constrained generation are marked in the shaded area.
The result with $\ast$ is produced by ourselves using the official checkpoint in \citet{lou2023reflectedDM} and the predictor-corrector (PC) algorithm.
The other results are from their original papers.
}
\vspace{0.5em}
\label{tab:cifar-results}
\resizebox{1\linewidth}{!}{
\begin{tabular}{lcc}
\toprule
Method                                     &FID           &NFE \\
\midrule
NCSN++ \citep{song2020score}               &2.20          &2000 \\
DDPM++ \citep{song2020score}               &2.41          &2000 \\
Subspace NCSN++ \citep{jing2022subspace}   &2.17          &2000 \\
DDIM \citep{song2020denoising}             &4.16          &100 \\
FM \citep{lipman2023FM}                    &6.35          &142 \\ 
\midrule
\rowcolor{gray!20}RDM \citep{lou2023reflectedDM}             &2.72          &2000\\
\rowcolor{gray!20}RDM \citep{lou2023reflectedDM}             &44.40$^\ast$  &200\\
\rowcolor{gray!20}\textbf{RFM (ours)}                        &4.76          &139 \\ 
\bottomrule
\end{tabular}
}
\vspace{-1em}
\end{table}
\begin{figure*}[t]
    \centering
    \includegraphics[width=\linewidth]{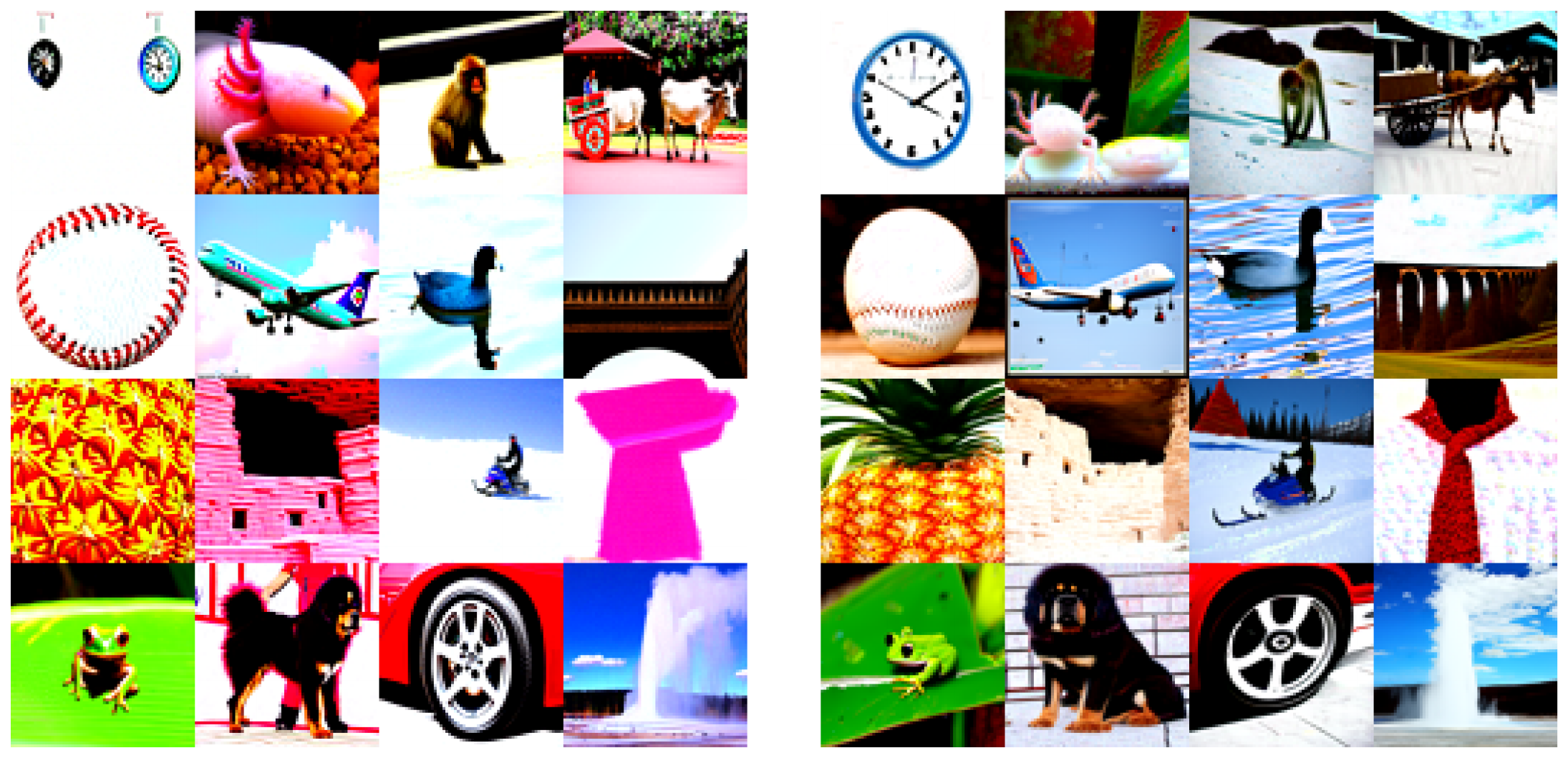}
    \caption{Class-conditioned guided samples from the CNFs trained with FM (left, $\textrm{NFE}=769$) and the reflected CNFs trained with RFM (right, $\textrm{NFE}=723$) with a high guidance weight $w=15$ on ImageNet ($64\times 64$).
    The samples from CNFs are clipped to $[0,255]$ after the ODE simulation.
    }
    \label{fig:imagenet64}
\end{figure*}
\subsection{Image Generation Benchmarks}
We then explore the performances of RFM for the unconditional image generation task on CIFAR-10 $(32\times 32)$ and the conditional generation task on ImageNet $(64\times 64)$.
For both data sets, the data are rescaled to a hypercube $\Omega=[-1,1]^{3\times m\times m}$ ($m$ is the image size), which is a convex domain (Example \ref{example:convex}).
For CNFs trained with FM, we set the prior distribution to be a standard Gaussian distribution and use the OT conditional velocity field.
For reflected CNFs trained with RFM, we set the prior distribution to be a truncated standard Gaussian distribution over $\Omega$ and use the OT conditional velocity field in equation \eqref{eq:OT-probability-paths}.
The architectures of the velocity model $\bm{v}_{\bm{\theta}}(\bm{x},t)$ for CIFAR-10 or the class-conditioned velocity model $\bm{v}_{\bm{\theta}}(\bm{x},t,c)$ for ImageNet employed by RFM and FM are the same as the UNet \citep{ronneberger2015unet} of the score function model in \citet{dhariwal2021diffusion}.
On CIFAR-10, the velocity model of RFM is optimized with Adam \citep{ADAM} and a constant learning rate of 0.0002 after a warm-up phase of 5000 training steps; on ImageNet, the velocity model of RFM is optimized with AdamW \citep{loshchilov2018adamw} and a constant learning rate of 0.0001 after a warm-up phase of 5000 training steps.
The total number of training steps is 800,000 on CIFAR-10 and 540,000 on ImageNet.
The batch size is set to 128 on CIFAR-10 and 2048 on ImageNet.
We also train CNFs with FM on ImageNet by ourselves using the same setting as RFM.
We use the adaptive-step Dormand–Prince method \citep{dormand1980dopri} with absolute and relative tolerances of $10^{-5}$ to sample from the CNFs and the reflected CNFs.
See more implementation details in Appendix \ref{app:cifar10} and \ref{app:imagenet64}.

\paragraph{Unconditional Image Generation}
In Table \ref{tab:cifar-results}, we draw 50,000 samples and report the Fréchet inception distance (FID) score \citep{Seitzer2020FID} as well as the NFE obtained by different methods.
Among the methods for constrained generation, we see that although RDM with the predictor-corrector (PC) sampler works well under a large NFE, it has a large performance drop when the NFE is small.
In contrast, RFM can obtain high-quality samples on constrained domains with small NFEs, inheriting the benefit of straighter velocity fields of FM.
Besides, we want to point out that the FID score produced by FM can be sensitive to the experimental setting (e.g., FID=3.66 reproduced by \citet{tong2023CFM}).
In comparison with FM, the benefit of RFM mainly comes from a zero boundary violation rate and more realistic image samples (see the next paragraph), and the FID score is not very sensitive to the boundary violation issue.
Image samples generated from reflected CNFs trained with RFM can be found in Figure \ref{fig:cifar10} of Appendix \ref{app:cifar-results}.

\begin{table}[t]
\centering 
\caption{Digit-level constraint violation ratio of FM and RFM on the ImageNet ($64\times 64$) conditional generation task.
(Reflected) CNFs are simulated by the fixed-step third-order Heun algorithm with 600 NFE.
}
\vspace{0.5em}
\label{tab:imagenet-violation-results}
\resizebox{\linewidth}{!}{
\begin{tabular}{lccc}
\toprule
Flow guidance weight $w$ & 7.5 & 15 & 25 \\
\midrule
FM & 51.28\%&	61.12\%&	64.51\%\\
RFM & 0.0\%&0.0\%&0.0\%\\
\bottomrule
\end{tabular}
}
\vspace{-1em}
\end{table}

\paragraph{Conditional Image Generation}
Figure \ref{fig:imagenet64} shows the class-conditioned samples generated by the CNFs trained with FM and the reflected CNFs trained with RFM under a high guidance weight using the flow guidance introduced in Section \ref{sec:sampling}.
We see that the vanilla CNFs tend to generate oversaturated and unnatural images due to the boundary violation.
In contrast, reflected CNFs trained with RFM effectively improve the sample quality by producing more faithful images with a comparable NFE.
Table \ref{tab:imagenet-violation-results} reports the constraint violation ratios of different methods with varying guidance weights.
We see that the constraint violation ratio of FM is large for this high-dimensional task and can increase as the flow guidance weight gets larger.

\section{Conclusion}
In this work, we presented reflected flow matching (RFM), an extension of flow matching for constrained domains through reflected ODEs/CNFs.
By leveraging conditional probability paths that adhere to the constraints, we show that the velocity fields in the reflected ODEs/CNFs can be effectively trained by matching the analytical conditional velocity fields in a simulation-free manner, akin to FM.
We demonstrate the efficacy of RFM across various choices of conditional velocity fields on low-dimensional convex and nonconvex domains.
On standard image benchmarks, we show that RFM performs on par or better than existing baselines with a small NFE and produces more faithful samples under high guidance weight.
Further applications of RFM to constrained generative modeling of high-resolution images, videos, cloud points, human motions, etc., will be interesting future directions.

\paragraph{Limitations}
To apply RFM, we have the following requirements for the constrained domain: i) the domain has a regular shape so that we can design a conditional flow that is confined to it. All convex domains satisfy this requirement. ii) the boundary has an analytical form to compute the intersection point and the reflection direction if necessary. iii) the conditional flow is easy to simulate and differentiate for velocity computation.
We leave exploring the design of conditional probability paths on more general domains to future work. 

\section*{Impact Statement}
This paper presents work whose goal is to advance the field of machine learning. There are many potential societal consequences of our work, none of which we feel must be specifically highlighted here.

\section*{Acknowledgements}
This work was supported by National Natural Science Foundation of China (grant no. 12201014 and grant no. 12292983).
The research of Cheng Zhang was support in part by National Engineering Laboratory for Big Data Analysis and Applications, the Key Laboratory of Mathematics and Its Applications (LMAM) and the Key Laboratory of Mathematical Economics and Quantitative Finance (LMEQF) of Peking University.
The authors appreciate the anonymous ICML reviewers for their constructive feedback.

\bibliography{main}
\bibliographystyle{icml2024}

\newpage
\appendix
\onecolumn

\section{Theoretical Results}\label{app:theoretical-results}
\subsection{Details of Theorem \ref{thm:existence}}\label{proof:existence}
\textbf{Theorem \ref{thm:existence}.} Assume\newline (i) the domain $\Omega$ satisfies the \emph{uniform exterior sphere condition}, i.e., 
\[
\exists r_0>0, \forall \bm{x}\in\partial\Omega, \textrm{we have} \left\{\bm{n}\big| \|\bm{n}\|=1, B(\bm{x}-r_0\bm{n},r_0)\cap\Omega=\emptyset\right\}\neq \emptyset,
\]
where $B(\bm{x},r)$ is a open ball with center $\bm{x}$ and radius $r$;
\newline
(ii) the domain $\Omega$ satisfies the \emph{uniform cone condition}, i.e.,
\[
\exists \delta>0, \alpha\in[0,1), \forall \bm{x}\in\partial\Omega, \textrm{we have} \left\{\bm{m}\big|\|\bm{m}\|=1, \forall \bm{y}\in B(\bm{x},\delta)\cap\partial\Omega, C(\bm{y},\bm{m},\alpha)\cup B(\bm{x},\delta)\subset\Omega\right\} \neq \emptyset,
\]
where $C(\bm{y},\bm{m},\alpha)=\left\{\bm{z}\big| (\bm{z}-\bm{y})\cdot \bm{m}\geq \|\bm{z}-\bm{y}\|\alpha \right\}$.
\newline
(iii) the velocity field $\bm{v}_t(\bm{x})$ is Lipschitz continuous in $\bm{x}\in\Omega$ (uniformly on $t$).
\newline Then the solution to the reflected ODE \eqref{eq:reflected-particle-ODE} exists and is unique on $t\in[0,1]$.

\paragraph{Sketch of Proof}
The reflected ODE we considered can be viewed as a special case of reflected SDE where the diffusion coefficient is zero.
Therefore, Theorem \ref{thm:existence} is indeed a corollary of \citet[Theorem 2.5.4]{pilipenko2014reflectedSDE} with no diffusion term (i.e., diffusion coefficients $b_k(\bm{x}_t)$, $1\leq k\leq m$, are zeros).

For a more rigorous derivation, note that \citet[Theorem 2.5.2]{pilipenko2014reflectedSDE} says that under the assumption (i) (ii) of the domain $D$ in our Theorem \ref{thm:existence}, there exists a unique solution to the Skorokhod problem with normal reflection as long as the drift function $a(\bm{x}_t)$ is continuous.
To prove the existence and uniqueness of the solution to the reflected ODE, one can first use Euler discretization to find a sequence of approximate solutions whose existence and uniqueness are guaranteed by \citet[Theorem 2.5.2]{pilipenko2014reflectedSDE}. 
We can then prove that this sequence of approximate solutions converges to the solution of the reflected ODE problem as the stepsize goes to zero. 
This proof simply follows the proof provided by \citet{pilipenko2014reflectedSDE} for reflected SDEs.

\subsection{Proof of Theorem \ref{thm:crfm}}\label{proof:crfm}
First of all, we have
\begin{subequations}
\footnotesize
\begin{align}
\|\bm{v}_{\bm{\theta}}(\bm{x}, t)-\bm{v}_t(\bm{x})\|^2&= \|\bm{v}_{\bm{\theta}}(\bm{x}, t)\|^2-2\bm{v}_{\bm{\theta}}(\bm{x}, t)\cdot\bm{v}_t(\bm{x})+\|\bm{v}_t(\bm{x})\|^2\\
\|\bm{v}_{\bm{\theta}}(\bm{x}, t)-\bm{v}_t(\bm{x}|\bm{x}_1)\|^2&=\|\bm{v}_{\bm{\theta}}(\bm{x}, t)\|^2-2\bm{v}_{\bm{\theta}}(\bm{x}, t)\cdot\bm{v}_t(\bm{x}|\bm{x}_1)+\|\bm{v}_t(\bm{x}|\bm{x}_1)\|^2
\end{align}
\end{subequations}
To prove that $\nabla \mathcal{L}_{\textrm{RFM}}(\bm{\theta}) = \nabla \mathcal{L}_{\textrm{CRFM}}(\bm{\theta})$, it can be easily seen that
\begin{equation}
\mathbb{E}_{p_{\mathrm{data}}(\bm{x}_1)p_t(\bm{x}|\bm{x}_1)}\|\bm{v}_{\bm{\theta}}(\bm{x}, t)\|^2=\mathbb{E}_{p_t(\bm{x})}\|\bm{v}_{\bm{\theta}}(\bm{x}, t)\|^2
\end{equation}
by the definition of $p_t(\bm{x})$ in equation \eqref{eq:marginal-probability-path}.
The remaining thing to show is
\begin{equation}
\mathbb{E}_{p_{\mathrm{data}}(\bm{x}_1)p_t(\bm{x}|\bm{x}_1)}\bm{v}_{\bm{\theta}}(\bm{x}, t)\cdot\bm{v}_t(\bm{x}|\bm{x}_1)=
\mathbb{E}_{p_t(\bm{x})}\bm{v}_{\bm{\theta}}(\bm{x}, t)\cdot\bm{v}_t(\bm{x}).
\end{equation}
By the definition of $\bm{v}_t(\bm{x})$ in equation \eqref{eq:marginal-velocity-field}, we have
\begin{align*}
\mathbb{E}_{p_t(\bm{x})}\bm{v}_{\bm{\theta}}(\bm{x}, t)\cdot\bm{v}_t(\bm{x})&=\int_\Omega p_t(\bm{x})\bm{v}_{\bm{\theta}}(\bm{x}, t)\cdot\bm{v}_t(\bm{x})\dif\bm{x}\\
&=\int_\Omega p_t(\bm{x})\bm{v}_{\bm{\theta}}(\bm{x}, t)\cdot \int_\Omega \bm{v}_t(\bm{x}|\bm{x}_1)\frac{p_t(\bm{x}|\bm{x}_1)p_{\mathrm{data}}(\bm{x}_1)}{p_t(\bm{x})}\dif \bm{x}_1\dif\bm{x}\\
&=\int_\Omega\int_\Omega  \bm{v}_{\bm{\theta}}(\bm{x}, t)\cdot \bm{v}_t(\bm{x}|\bm{x}_1)p_t(\bm{x}|\bm{x}_1)p_{\mathrm{data}}(\bm{x}_1)\dif \bm{x}_1\dif\bm{x}\\
&=\mathbb{E}_{p_{\mathrm{data}}(\bm{x}_1)p_t(\bm{x}|\bm{x}_1)}\bm{v}_{\bm{\theta}}(\bm{x}, t)\cdot\bm{v}_t(\bm{x}|\bm{x}_1).
\end{align*}
This finishes the proof.

\subsection{Proof of Theorem \ref{thm:wasserstein}}\label{proof:wasserstein}
We first give an alternative expression \citep{pilipenko2014reflectedSDE} of the reflected ODE
\begin{subequations}\label{eq:reflected-ode-app}
\arraycolsep=1.8pt
\def\arraystretch{1.5}
\begin{align}
\mathrm{d}\bm{\phi}_t(\bm{x}) &= \bm{v}_t(\bm{\phi}_t(\bm{x}))\mathrm{d}t + \bm{n}(\bm{\phi}_t(\bm{x}))\dif l_t,\\
\bm{\phi}_0(\bm{x}) &= \bm{x},
\end{align}
\end{subequations}
where the initial point $\bm{x}\sim p_0(\bm{x})$, $\bm{n}(\bm{x})$ is the  inward unit normal vector at $\bm{x}$ on $\partial\Omega$, and $l_t$ is non-decreasing in $t$ with $l_0=0$.
Moreover, $l_t$ satisfies $\int_{0}^t \mathbb{I}(\bm{\phi}_s(\bm{x})\notin\partial\Omega)\dif l_s=0$ for $t>0$, i.e., $\dif l_t$ vanishes in the interior and will push the velocity along $\bm{n}(\bm{x})$ when the trajectory hits the boundary.

Let $\bm{\phi}_{\bm{\theta},t}(\bm{x})$ (with a reflection term $\bar{l}_t$) be the solution to the reflected ODE \eqref{eq:reflected-ode-app} with velocity field $\bm{v}_{\bm{\theta}}(\bm{x},t)$.
By the definition of Wasserstein-2 distance, we have 
\begin{equation}\label{eq:proof-1}
W_2^2\left(p_{1}(\bm{x}),p_{\bm{\theta},1}(\bm{x})\right)\leq \int_{\Omega}\|\bm{\phi}_1(\bm{x})-\bm{\phi}_{\bm{\theta},1}(\bm{x})\|^2 p_0(\bm{x})\dif \bm{x}:= \hat{W}_2^2\left(p_{t}(\bm{x}),p_{\bm{\theta},t}(\bm{x})\right)\big|_{t=1}.
\end{equation}
Take the derivative of $\hat{W}_2^2\left(p_{t}(\bm{x}),p_{\bm{\theta},t}(\bm{x})\right)$ w.r.t. $t$ gives
\begin{equation}
\dif \hat{W}_2^2\left(p_{t}(\bm{x}),p_{\bm{\theta},t}(\bm{x})\right)
=2\int_{\Omega}\left(\bm{\phi}_t(\bm{x})-\bm{\phi}_{\bm{\theta},t}(\bm{x})\right)\left(\dif\bm{\phi}_t(\bm{x})-\dif\bm{\phi}_{\bm{\theta},t}(\bm{x})\right) p_0(\bm{x})\dif \bm{x}
\end{equation}
Note that
\begin{align*}
&\left(\bm{\phi}_t(\bm{x})-\bm{\phi}_{\bm{\theta},t}(\bm{x})\right)\left(\dif\bm{\phi}_t(\bm{x})-\dif\bm{\phi}_{\bm{\theta},t}(\bm{x})\right)\\
=& \left(\bm{\phi}_t(\bm{x})-\bm{\phi}_{\bm{\theta},t}(\bm{x})\right)\left[(\bm{v}_t(\bm{\phi}_t(\bm{x}))-\bm{v}_{\bm{\theta}}(\bm{\phi}_{\bm{\theta},t}(\bm{x}),t))\mathrm{d}t + \bm{n}(\bm{\phi}_t(\bm{x}))\dif l_t-\bm{n}(\bm{\phi}_{\bm{\theta},t}(\bm{x}))\dif \bar{l}_t\right]\\
=& \left(\bm{\phi}_t(\bm{x})-\bm{\phi}_{\bm{\theta},t}(\bm{x})\right)(\bm{v}_t(\bm{\phi}_t(\bm{x}))-\bm{v}_{\bm{\theta}}(\bm{\phi}_{\bm{\theta},t}(\bm{x}),t))\mathrm{d}t \\
&\quad\quad\quad\quad\quad\quad+ \left(\bm{\phi}_t(\bm{x})-\bm{\phi}_{\bm{\theta},t}(\bm{x})\right)\bm{n}(\bm{\phi}_t(\bm{x}))\dif l_t - \left(\bm{\phi}_t(\bm{x})-\bm{\phi}_{\bm{\theta},t}(\bm{x})\right)\bm{n}(\bm{\phi}_{\bm{\theta},t}(\bm{x}))\dif \bar{l}_t,
\end{align*}
The first term can be bounded by
\begin{align*}
&\left(\bm{\phi}_t(\bm{x})-\bm{\phi}_{\bm{\theta},t}(\bm{x})\right)(\bm{v}_t(\bm{\phi}_t(\bm{x}))-\bm{v}_{\bm{\theta}}(\bm{\phi}_{\bm{\theta},t}(\bm{x}),t))\\
=&\left(\bm{\phi}_t(\bm{x})-\bm{\phi}_{\bm{\theta},t}(\bm{x})\right)(\bm{v}_t(\bm{\phi}_t(\bm{x}))-\bm{v}_{\bm{\theta}}(\bm{\phi}_t(\bm{x}),t))+\left(\bm{\phi}_t(\bm{x})-\bm{\phi}_{\bm{\theta},t}(\bm{x})\right)(\bm{v}_{\bm{\theta}}(\bm{\phi}_t(\bm{x}),t)-\bm{v}_{\bm{\theta}}(\bm{\phi}_{\bm{\theta},t}(\bm{x}),t))\\
\leq & \frac{1}{2}\|\bm{\phi}_t(\bm{x})-\bm{\phi}_{\bm{\theta},t}(\bm{x})\|^2+\frac{1}{2}\|\bm{v}_t(\bm{\phi}_t(\bm{x}))-\bm{v}_{\bm{\theta}}(\bm{\phi}_t(\bm{x}),t)\|^2 + M\|\bm{\phi}_t(\bm{x})-\bm{\phi}_{\bm{\theta},t}(\bm{x})\|^2
\end{align*}
where in the last inequality we use the mean inequality and the fact that $\bm{v}_{\bm{\theta}}(\bm{x},t)$ is $M$-Lipschitz in $\bm{x}$.
To handle the second term, note that $dl_t>0$ only if $\bm{\phi}_t(\bm{x})\in\partial\Omega$. When $\bm{\phi}_t(\bm{x})\in\partial\Omega$, we know 
$\left(\bm{\phi}_t(\bm{x})-\bm{\phi}_{\bm{\theta},t}(\bm{x})\right)\bm{n}(\bm{\phi}_t(\bm{x}))\leq 0$ due to the convexity of $\Omega$.
Therefore, the second term satisfies
\[
\left(\bm{\phi}_t(\bm{x})-\bm{\phi}_{\bm{\theta},t}(\bm{x})\right)\bm{n}(\bm{\phi}_t(\bm{x}))\dif l_t\leq 0.
\]
Using the same argument, we know that the third term satisfies
\[
-\left(\bm{\phi}_t(\bm{x})-\bm{\phi}_{\bm{\theta},t}(\bm{x})\right)\bm{n}(\bm{\phi}_{\bm{\theta},t}(\bm{x}))\dif \bar{l}_t\leq 0
\]
See \citet{lamperski2021projected} for a more rigorous argument.
Therefore, we have the following bound for $\dif \hat{W}_2^2\left(p_{t}(\bm{x}),p_{\bm{\theta},t}(\bm{x})\right)/\dif t$
\begin{align*}
&\dif \hat{W}_2^2\left(p_{t}(\bm{x}),p_{\bm{\theta},t}(\bm{x})\right)/\dif t\\
\leq & \int_{\Omega}\|\bm{\phi}_t(\bm{x})-\bm{\phi}_{\bm{\theta},t}(\bm{x})\|^2p_0(\bm{x})\dif \bm{x}+\int_{\Omega}\|\bm{v}_t(\bm{\phi}_t(\bm{x}))-\bm{v}_{\bm{\theta}}(\bm{\phi}_t(\bm{x}),t)\|^2p_0(\bm{x})\dif \bm{x} + 2M\int_{\Omega}\|\bm{\phi}_t(\bm{x})-\bm{\phi}_{\bm{\theta},t}(\bm{x})\|^2
p_0(\bm{x})\dif \bm{x}\\
=&(1+2M)\hat{W}_2^2\left(p_{t}(\bm{x}),p_{\bm{\theta},t}(\bm{x})\right)+\int_{\Omega}\|\bm{v}_t(\bm{\phi}_t(\bm{x}))-\bm{v}_{\bm{\theta}}(\bm{\phi}_t(\bm{x}),t)\|^2p_0(\bm{x})\dif \bm{x}.
\end{align*}
By Gronwall's inequality and $\hat{W}_2^2\left(p_{t}(\bm{x}),p_{\bm{\theta},t}(\bm{x})\right)\big|_{t=0}=0$, we have
\begin{equation}\label{eq:proof-2}
\hat{W}_2^2\left(p_{t}(\bm{x}),p_{\bm{\theta},t}(\bm{x})\right) \leq e^{1+2M}\int_0^1\int_{\Omega}\|\bm{v}_t(\bm{\phi}_t(\bm{x}))-\bm{v}_{\bm{\theta}}(\bm{\phi}_t(\bm{x}),t)\|^2p_0(\bm{x})\dif \bm{x}\dif t=e^{1+2M}\mathcal{L}_{\textrm{RFM}}(\bm{\theta}).
\end{equation}
By combining equation \eqref{eq:proof-1} and \eqref{eq:proof-2}, we finally conclude
\[
W_2^2\left(p_{1}(\bm{x}),p_{\bm{\theta},1}(\bm{x})\right)\leq e^{1+2M}\mathcal{L}_{\textrm{RFM}}(\bm{\theta}).
\]
This finishes the proof.

\subsection{Proof of Theorem \ref{thm:guided-velocity-field}}\label{proof:guided-velocity-field}

Let $\Omega=\mathbb{R}^d$ and the prior distribution $p_0(\cdot)$ be the standard Gaussian distribution.
Under the OT conditional flow, the conditional velocity field is $\bm{v}_t(\bm{x}|\bm{x}_1)=\frac{\bm{x}_1-(1-\sigma_{\min})\bm{x}}{1-(1-\sigma_{\min})t}$, and the conditional probability path is $p_{t}(\bm{x}|\bm{x}_1)=\mathcal{N}(t\bm{x}_1,(1-(1-\sigma_{\min})t)^2\bm{I})$.
Define
\[
\bar{p}_t(\bm{x}_1|\bm{x}) = \frac{p_t(\bm{x}|\bm{x}_1)p_{\textrm{data}}(\bm{x}_1)}{p_t(\bm{x})}.
\]
By the definition of $\bm{v}_t(\bm{x})$,
we have
\begin{align*}
\bm{v}_t(\bm{x})&=\mathbb{E}_{\bar{p}_t(\bm{x}_1|\bm{x})}\bm{v}_t(\bm{x}|\bm{x}_1)\\
&=\mathbb{E}_{\bar{p}_t(\bm{x}_1|\bm{x})}\frac{\bm{x}_1-(1-\sigma_{\min})\bm{x}}{1-(1-\sigma_{\min})t}\\
&=\frac{\bm{x}}{t}-\frac{1-(1-\sigma_{\min})t}{t}\mathbb{E}_{\bar{p}_t(\bm{x}_1|\bm{x})}\frac{\bm{x}-t\bm{x}_1}{(1-(1-\sigma_{\min})t)^2}\\
&=\frac{\bm{x}}{t}+\frac{1-(1-\sigma_{\min})t}{t}\mathbb{E}_{\bar{p}_t(\bm{x}_1|\bm{x})}\nabla\log p_{t}(\bm{x}|\bm{x}_1)\\
&=\frac{\bm{x}}{t}+\frac{1-(1-\sigma_{\min})t}{t}\nabla\log p_{t}(\bm{x})
\end{align*}
where we use the fact
\[
\mathbb{E}_{\bar{p}_t(\bm{x}_1|\bm{x})} \nabla_{\bm{x}}\log p_{t}(\bm{x}|\bm{x}_1)=\int \frac{p_t(\bm{x}|\bm{x}_1)p_{\textrm{data}}(\bm{x}_1)}{p_{t}(\bm{x})}\frac{\nabla_{\bm{x}} p_{t}(\bm{x}|\bm{x}_1)}{p_{t}(\bm{x}|\bm{x}_1)}\dif \bm{x}_1=\frac{1}{p_{t}(\bm{x})}\nabla_{\bm{x}} \int  p_{t}(\bm{x}|\bm{x}_1)p_{\mathrm{data}}(\bm{x}_1)\dif \bm{x}_1=\nabla_{\bm{x}}\log p_{t}(\bm{x}).
\]
This finishes the proof.

\section{Implementation Details}\label{app:implementation-details}
\subsection{Low-dimensional Toy Examples}\label{app:implimentation-details-toy}
For all data sets, we use 6-layer MLPs with 512 channels for the parametrization of the velocity model $\bm{v}_{\bm{\theta}}(\bm{x},t)$.
We use the sinusoidal positional embedding \citep{vaswani2017transformer}  with 512 channels for the time step $t$ and adds the time embeddings to the input of each activition function in MLPs.
We add a residual block after each linear layer in MLPs.
All the activition function is set to be the Gaussian error linear units (GELU) \citep{hendrycks2016GELU}.
All models are implemented in PyTorch \citep{Paszke2019PyTorchAI} and optimized with the Adam \citep{ADAM} optimizer ($\beta_1=0.9,\beta_2=0.999$).
The learning rate is set to be 0.0003 at the beginning, with a decay rate of 0.75 per 10,000 iterations.
The results are collected after 200,000 iterations.

\begin{table}[t]
\centering 
\caption{Hyper-parameters used for training each model.}
\begin{tabular}{l cc}
\toprule
Datasets                          &CIFAR-10      &ImageNet64 \\
\midrule
Channels                          &128           &192 \\ 
Depth                             &2             &3 \\ 
Channels multiple                 &1,2,2,2       &1,2,3,4  \\
Heads                             &4             &4\\ 
Heads Channels                    &64            &64 \\
Attention resolution              &16            &32,16,8 \\
Dropout                           &0.1           &0.1 \\
Effective Batch size              &128           &2048 \\
Iterations                        &800k          &540k  \\
Learning Rate                     &2e-4          &1e-4 \\
EMA rate                          &0.9999        &0.9999 \\
Learning Rate Scheduler           &Constant      &Constant \\
Warmup Steps                      &5k            &5k \\
\bottomrule
\end{tabular}
\label{trainingsetting}
\end{table}

\subsection{Unconditional Image Generation}\label{app:cifar10}
The data are rescaled to a hypercube $\Omega=[-1,1]^{3\times 32\times 32}$.
For FM, we set the prior distribution to be a standard Gaussian distribution and use the OT conditional velocity field.
For RFM, we set the prior distribution to be a truncated standard Gaussian distribution over $\Omega$ and use the OT conditional velocity field in equation \eqref{eq:OT-probability-paths}.
The architecture of the velocity model $\bm{v}_{\bm{\theta}}(\bm{x},t)$ in RFM and FM is the same as the UNet \citep{ronneberger2015unet} of score function model in \citet{dhariwal2021diffusion} (see Table \ref{trainingsetting} for details).
We use full 32-bit precision for training on CIFAR-10. 
Following \citet{lipman2023FM}, the velocity model in RFM is optimized with Adam \citep{ADAM} optimizer ($\beta_{1} = 0.9$, $\beta_{2} = 0.999$, $\textrm{weight decay} = 0.0$, and $\epsilon=10^{-8}$) and a constant learning rate of 0.0002 after a warm-up phase of 5000 training steps.
In the warm-up phase, the learning rate is linearly increased from $10^{-8}$ to the maximum learning rate 0.0002.
The results are collected after 800,000 iterations with a batch size of 128.
We use the Dormand–Prince method \citep{dormand1980dopri} with absolute and relative tolerances of $10^{-5}$ to simulate the reflected CNFs.
It costs 1.5 day on 8 Nvidia 2080 Ti GPUs to train reflected CNFs with RFM on CIFAR-10. 

\subsection{Conditional Image Generation}\label{app:imagenet64}
Similarly to the unconditional image generation tasks, we rescale the data to a hypercube $\Omega=[-1,1]^{3\times 64\times 64}$.
The prior distribution is set to be the standard Gaussian distribution for FM and the truncated standard Gaussian distribution for RFM.
Both FM and RFM use the OT conditional velocity field.
The class-conditioned velocity field model $\bm{v}_{\bm{\theta}}(\bm{x},t,c)$ in RFM and FM is the same as the UNet in the class-conditioned score function model in \citet{dhariwal2021diffusion} (see Table \ref{trainingsetting}).
We use 16-bit mixed precision for training on ImageNet64.
We train FM and RFM using AdamW optimizer \citep{loshchilov2018adamw} with a constant learning rate of 0.0001 after a warm-up phase of 5000 steps.
In the warm-up phase, the learning rate is linearly increased from $10^{-8}$ to the maximum learning rate 0.0001.
The results are collected after 540,000 iterations with a batch size of 2048.
For both FM and RFM, we use the Dormand–Prince method \citep{dormand1980dopri} with absolute and relative tolerances of $10^{-5}$ to simulate the (reflected) CNFs.
It costs 14 days on 32 Nvidia A100 GPUs to train FM and RFM on ImageNet $(64\times 64)$.

\section{Addtional Experimental Results}\label{app:exp}

\subsection{Low-dimensional Toy Examples}\label{app:exp-toy-example}
For the four 2D constrained domains: hypercube, simplex, half annulus, and cup, we plot the velocity field in reflected CNFs learned by RFM in Figure \ref{fig:velocity-rfm} and the velocity field in CNFs learned by FM in Figure \ref{fig:velocity-fm}.
We also report the KL divergences to the ground truth (Table \ref{tab:KL2D}) and the constraint violation ratio (Table \ref{tab:outier2D}) of samples generated using the Dormand–Prince method \citep{dormand1980dopri} with absolute and relative tolerances of $10^{-5}$. According to Table \ref{tab:2D-NFE}, we see that compared to FM, RFM uses similar NFE on hypercube and simplex, but uses more NFE on half annulus and cup due to the non-optimal transport conditional velocity field.

We further investigate the effect of discretization steps on the constraint violation ratio (Table \ref{tab:outier-ablation}). Table \ref{tab:outier-ablation} shows that the constraint violation ratio can increase when NFE gets larger.
We attribute this to the more time the solver spends exploring around the boundary. 
When the NFE is small, the samples generated by FM collapse to the interior of the domain, which causes a low constraint violation rate but poor approximation accuracy.

\begin{table}[h]
\centering 
\caption{KL divergences to the ground truth obtained by different methods on low-dimensional generation tasks.
For each method, we calculate the KL divergence using the Python ITE module \citep{szabo2014ITE} with a sample size 50,000.
We generate the samples from CNFs and reflected CNFs using the Dormand–Prince method \citep{dormand1980dopri} with absolute and relative tolerances of $10^{-5}$.
The results are averaged over 10 independent runs with standard deviation in the brackets.
}
\label{tab:KL2D}
\vspace{0.5em}
\begin{tabular}{lcccccc}
\toprule
\multirow{2}{*}{Method}&\multicolumn{2}{c}{Hypercube}&\multicolumn{2}{c}{Simplex}&Half Annulus&Cup \\ 
\cmidrule(l){2-3}\cmidrule(l){4-5}\cmidrule(l){6-6}\cmidrule(l){7-7}
&$d=2$&$d=10$&$d=2$&$d=10$&$d=2$&$d=2$\\
\midrule
RDM&0.0110(0.0003)&0.4798(0.0006)&0.0055(0.0001)&0.7613(0.0036)& N/A &N/A\\
FM&0.0049(0.0008)&\textbf{0.0247(0.0028)}&0.0093(0.0018)&0.0563(0.0012)&0.0091(0.0008) &0.0132(0.0018)\\
RFM&\textbf{0.0020(0.0013)}&0.0248(0.0027)&\textbf{0.0030(0.0012)}&\textbf{0.0460(0.0013)} &\textbf{0.0032(0.0014)}&\textbf{0.0069(0.0019)}\\
\bottomrule
\end{tabular}
\end{table}

\begin{table}[h]
\centering 
\setlength\tabcolsep{3pt}
\caption{Constraint violation ratio (\textperthousand) of different methods on low-dimensional generation tasks.
We generate the samples from CNFs and reflected CNFs using the Dormand–Prince method \citep{dormand1980dopri} with absolute and relative tolerances of $10^{-5}$.
}
\label{tab:outier2D}
\vspace{0.5em}
\begin{tabular}{lcccccc}
\toprule
\multirow{2}{*}{Method}&\multicolumn{2}{c}{Hypercube}&\multicolumn{2}{c}{Simplex}&Half Annulus&Cup \\ 
\cmidrule(l){2-3}\cmidrule(l){4-5}\cmidrule(l){6-6}\cmidrule(l){7-7}
&$d=2$&$d=10$&$d=2$&$d=10$&$d=2$&$d=2$\\
\midrule
FM&3.2&28.6&12.1&92.8& 10.4 &15.1\\
RFM&0.0&0.0&0.0&0.0&0.0&0.0\\
\bottomrule
\end{tabular}
\end{table}

\begin{table}[h]
    \centering
    \caption{NFE used to samples from different models on low-dimensional generation tasks using the Dormand–Prince method \citep{dormand1980dopri} with absolute and relative tolerances of $10^{-5}$.}
    \label{tab:2D-NFE}
\vspace{0.5em}
\begin{tabular}{lcccccc}
\toprule
\multirow{2}{*}{Method}&\multicolumn{2}{c}{Hypercube}&\multicolumn{2}{c}{Simplex}&Half Annulus&Cup \\ 
\cmidrule(l){2-3}\cmidrule(l){4-5}\cmidrule(l){6-6}\cmidrule(l){7-7}
&$d=2$&$d=10$&$d=2$&$d=10$&$d=2$&$d=2$\\
\midrule
FM&103&92&111&151&110&119\\
RFM&117&112&116&99&206&275\\
\bottomrule
\end{tabular}
\end{table}

\begin{table}[h!]
\centering 
\setlength\tabcolsep{3pt}
\caption{Constraint violation ratio (\textperthousand) and KL divergence with varying NFE on the generation task on Simplex ($d=2$). Solver: third-order Heun algorithm.
}
\label{tab:outier-ablation}
\vspace{0.5em}
\begin{tabular}{lccc}
\toprule
NFE & 15& 60&	300\\
\midrule
Constraint violation ratio&	0.5&	1.7&	9.0\\
KL divergence&	0.1805(0.0024)&	0.0084(0.0013)&	0.0062(0.0011)\\
\bottomrule
\end{tabular}
\end{table}

\begin{figure}[t]
    \centering
    \includegraphics[width=\linewidth]{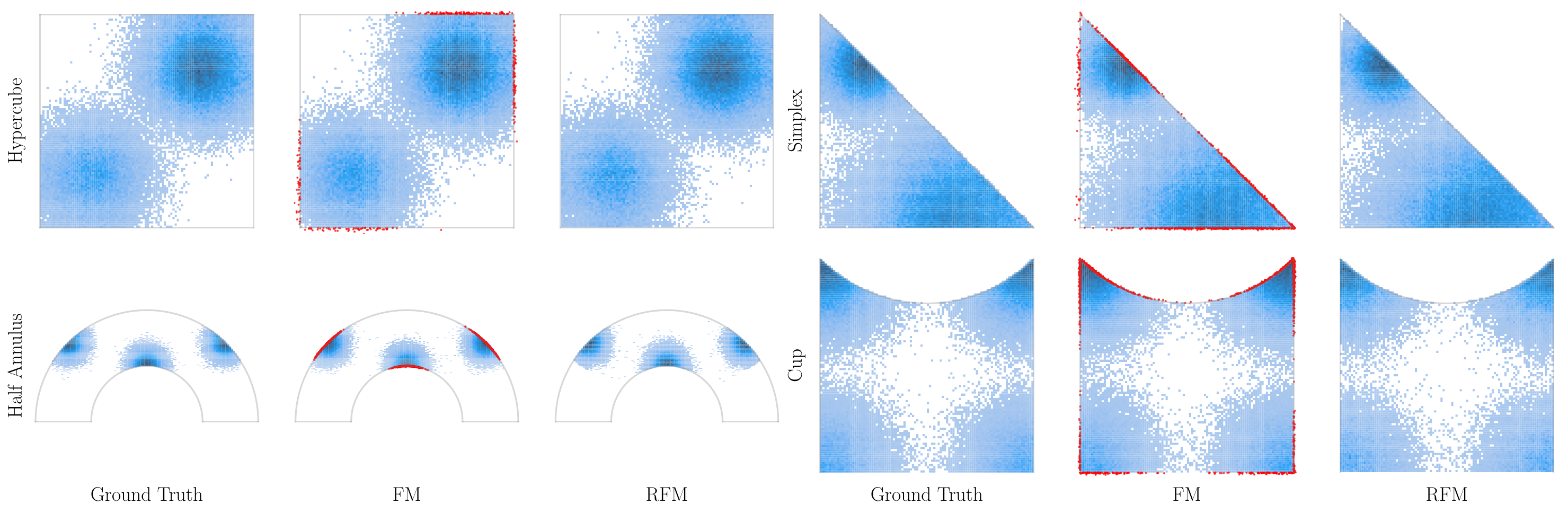}
    \caption{The histplots of samples obtained by different methods compared to the ground truth on the two-dimensional hypercube, simplex, and cup data set.
    We generate the samples from CNFs and reflected CNFs using the Dormand–Prince method \citep{dormand1980dopri} with absolute and relative tolerances of $10^{-5}$.
    Samples out of the constrained domain are plotted with red dots.
    The total sample size is 100,000.
    }
    \label{fig:histplot2D}
\end{figure}

\begin{figure}[t]
    \centering
    \includegraphics[width=\linewidth]{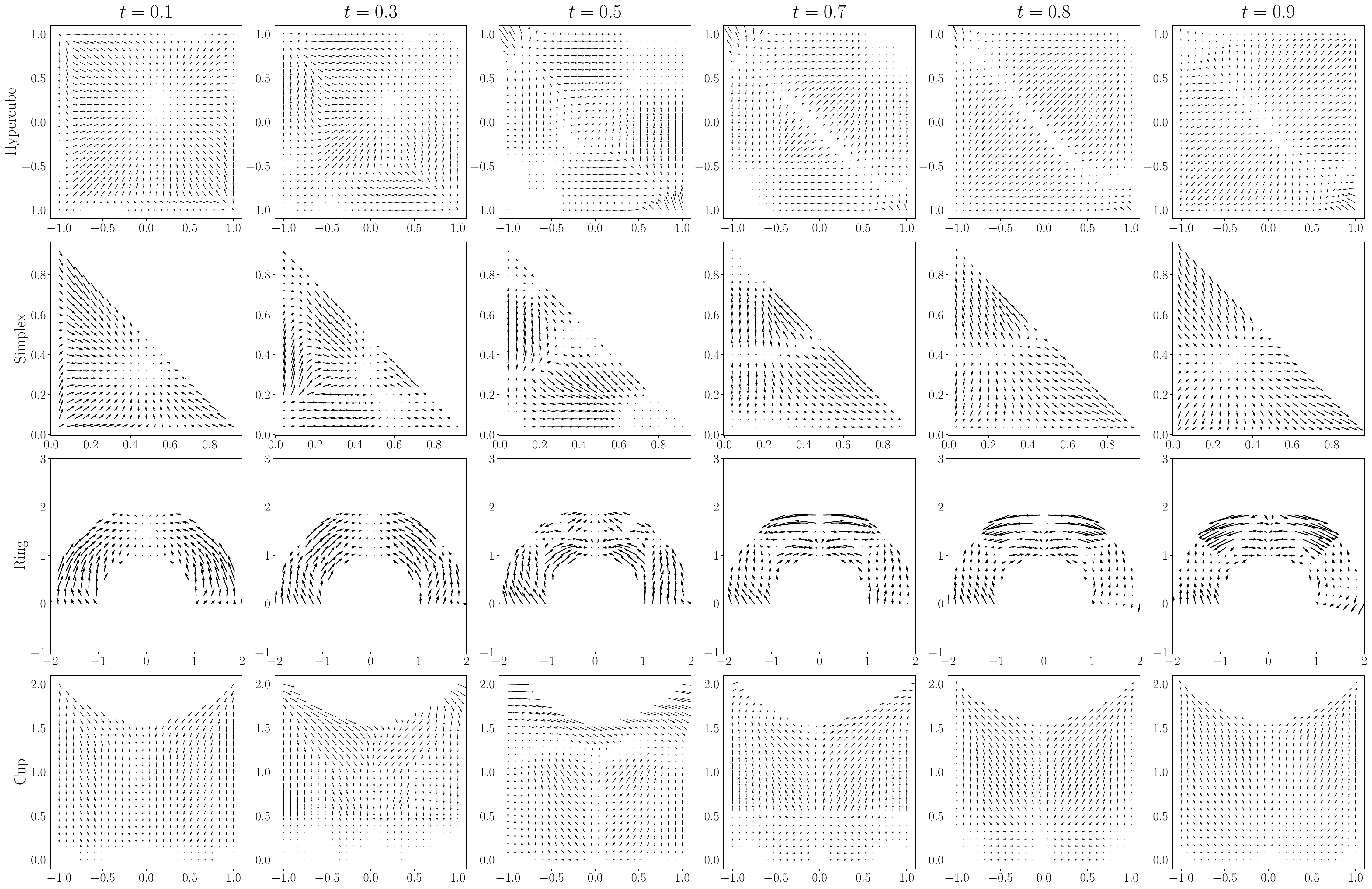}
    \caption{Velocity field in reflected CNFs learned by RFM for different $t$s on two-dimensional generation tasks.}
    \label{fig:velocity-rfm}
\end{figure}

\begin{figure}[t]
    \centering
    \includegraphics[width=\linewidth]{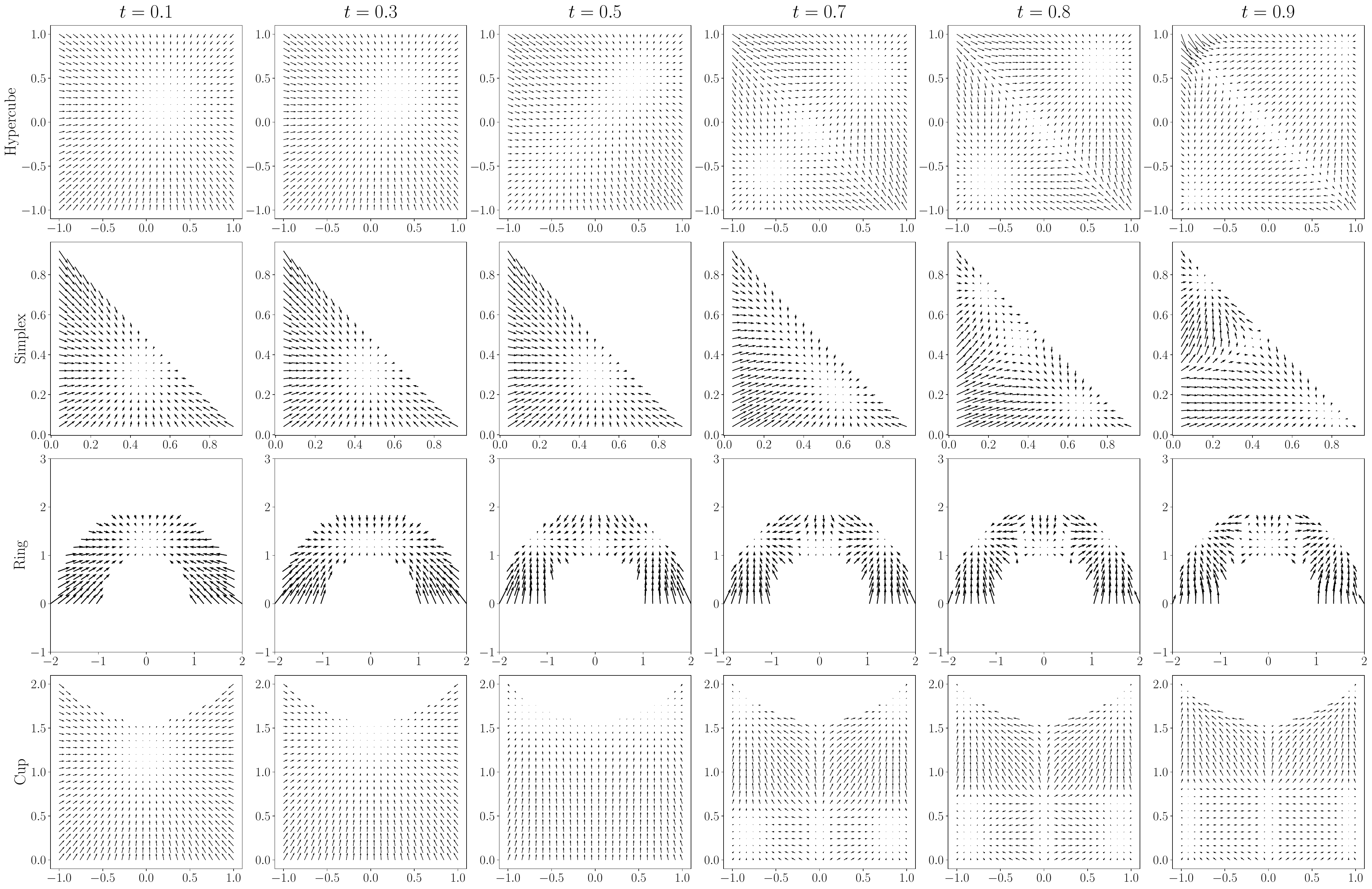}
    \caption{Velocity field in CNFs learned by FM for different $t$s on two-dimensional generation tasks.
    We only plot those velocity fields in the constrained domain.}
    \label{fig:velocity-fm}
\end{figure}

\subsection{Unconditional Image Generation}\label{app:cifar-results}
We investigate that the effect of the ODE solvers on the FID score in Table \ref{tab:cifar-solver-ablation}, where the DOPRI5 achieves the best result.
Samples generated by the reflected CNFs trained with RFM on CIFAR-10 are shown in Figure \ref{fig:cifar10}.
We also test the uniform distribution over the constrained domain as the prior distribution for RFM in Figure \ref{fig:cifar10-uniform}.

\begin{table}[h!]
\centering 
\setlength\tabcolsep{3pt}
\caption{FID score obtained by RFM with different ODE solvers \citep{chen2018neuralode} on CIFAR10.
}
\label{tab:cifar-solver-ablation}
\vspace{0.5em}
\begin{tabular}{lccccc}
\toprule
Solver&	Dopri5 (NFE=139)&	RK4 (NFE=100)&	RK4 (NFE=200)&	Heun3 (NFE=150)&	Heun3 (NFE=300)\\
\midrule
FID&	\textbf{4.76}&	5.37&	4.85&	4.92&	4.81\\
\bottomrule
\end{tabular}
\end{table}

\begin{figure}[t]
    \centering
    \includegraphics[width=\linewidth]{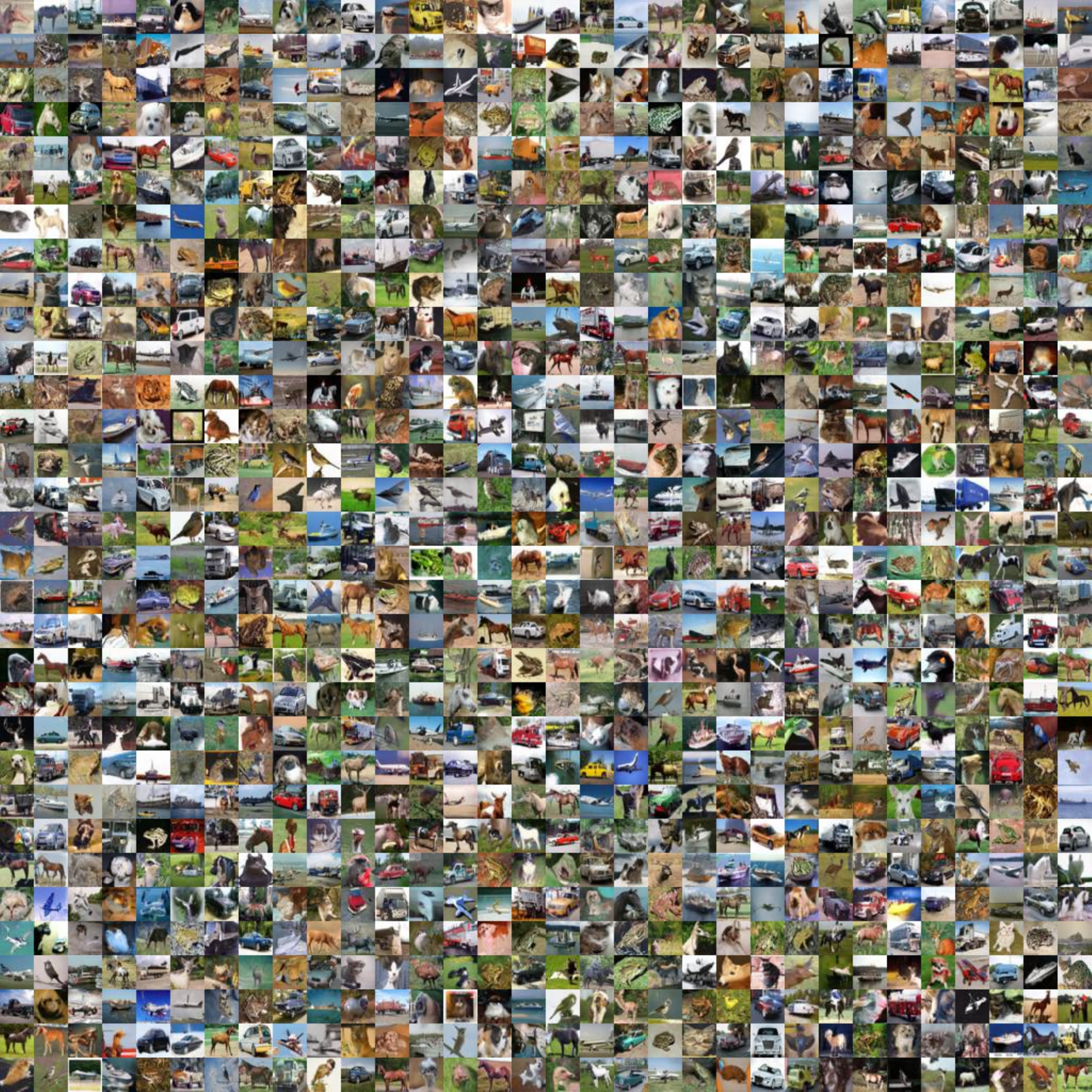}
    \caption{Samples generated by the reflected CNFs trained with RFM on CIFAR-10 ($\textrm{NFE}=139$). The prior distribution is set to the truncated standard Gaussian distribution over $[-1,1]^{3\times 32\times 32}$.
    The ODE solver is the Dormand–Prince method \citep{dormand1980dopri} with absolute and relative tolerances of $10^{-5}$.
    }
    \label{fig:cifar10}
\end{figure}

\begin{figure}[t]
    \centering
    \includegraphics[width=\linewidth]{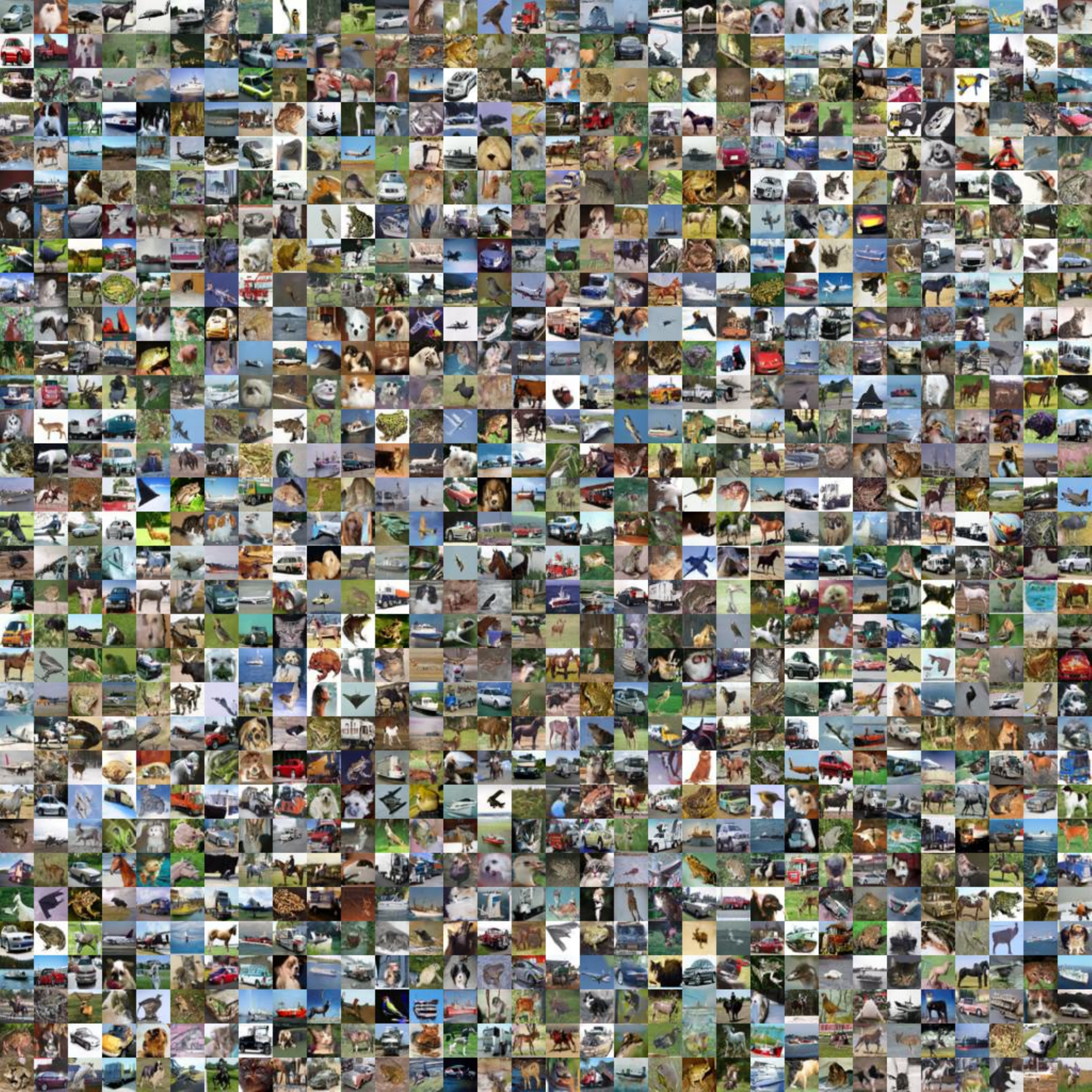}
    \caption{Samples generated by the reflected CNFs trained with RFM on CIFAR-10 ($\textrm{NFE}=118$). The prior distribution is set to the uniform distribution over $[0,1]^{3\times 32\times 32}$.
    The ODE solver is the Dormand–Prince method \citep{dormand1980dopri} with absolute and relative tolerances of $10^{-5}$.
    }
    \label{fig:cifar10-uniform}
\end{figure}

\subsection{Conditional Image Generation}\label{app:imagenet-results}
Table \ref{tab:imagenet-fid} reports the FID scores on ImageNet ($64\times 64$) conditional generation benchmark.
Generally, the FID score of class-conditioned images under high guidance weight can be poor and the generated images (Figure \ref{fig:imagenet64}) are more informative.
We provide a sample quality comparison between FM, RFM, and RDM in Figure \ref{fig:imagenet-appendix}.

\begin{table}[h!]
\centering 
\setlength\tabcolsep{3pt}
\caption{FID scores on ImageNet ($64\times  64$) conditional generation benchmark (flow guidance weight $w=15$). Solver: Dopri5 for FM/RFM, RK45 for RDM(ODE), Euler-Maruyama for RDM(SDE).
}
\label{tab:imagenet-fid}
\vspace{0.5em}
\begin{tabular}{lcccc}
\toprule
Method&	FM&	RDM (ODE)&	RDM (SDE)&	RFM\\
\midrule
NFE&	769&	871&	1000&	723\\
FID&	31.83&	66.74&	35.70&	\textbf{26.67}\\
\bottomrule
\end{tabular}
\end{table}

\begin{figure}[t]
    \centering
    \includegraphics[width=\linewidth]{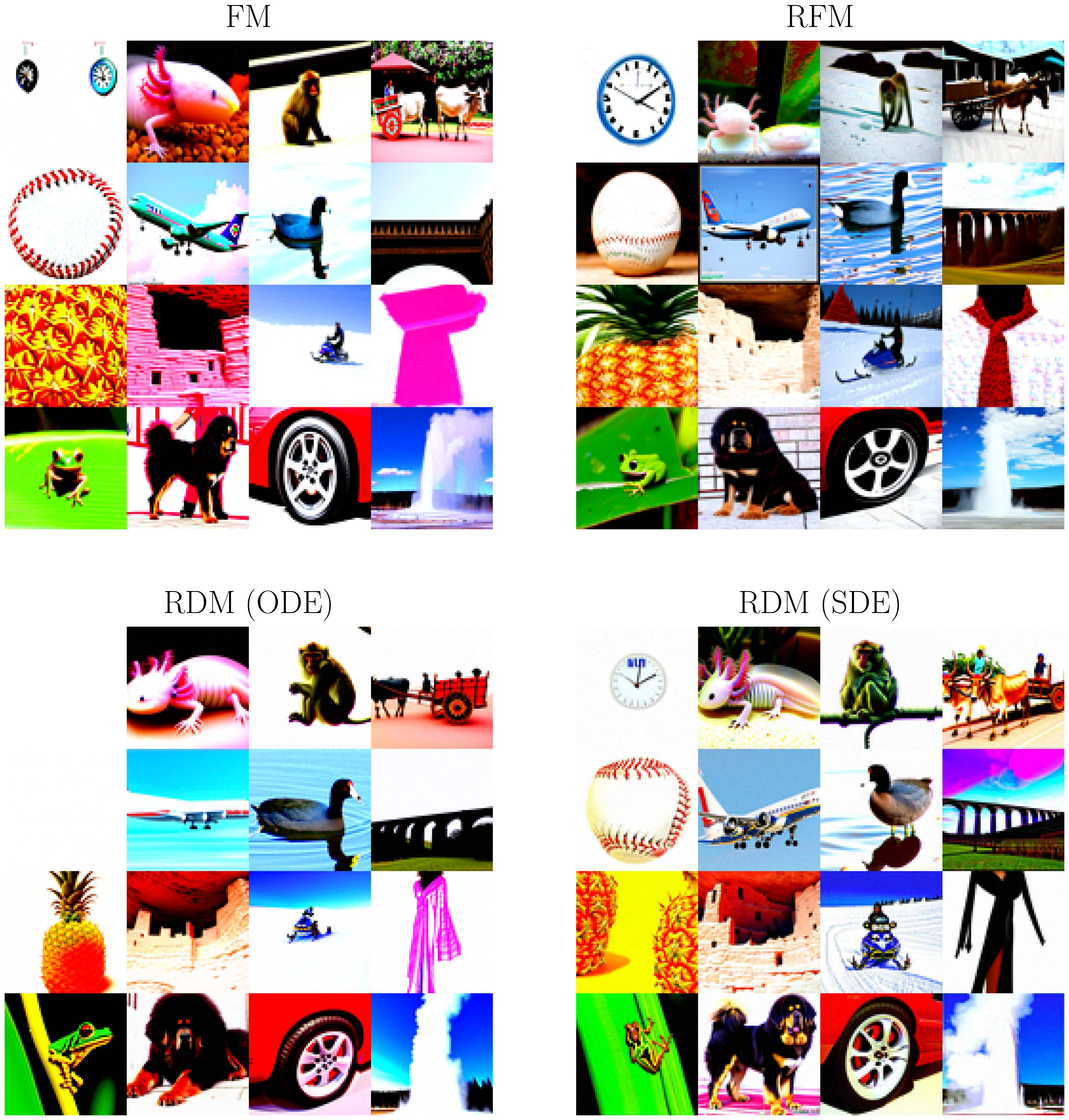}
    \caption{Class-conditioned guided samples with a high guidance weight $w=15$ on ImageNet ($64\times 64$) from different methods, including i) the CNFs trained with FM (upper left,  $\textrm{NFE}=769$, Dormand-Prince method); ii) the reflected CNFs trained with RFM (upper right, $\textrm{NFE}=723$, Dormand-Prince method); iii) backward ODE in RDM (lower left, $\textrm{NFE}=871$, Runge–Kutta–Fehlberg method without reflection); iv) backward SDE in RDM (lower right, $\textrm{NFE}=800$, Euler-Maruyama method with reflection).
    The samples of i) and iii) are clipped to $[0,255]$ after the ODE simulation.
    }
    \label{fig:imagenet-appendix}
\end{figure}

\end{document}